\let\chapter\section
\newtheorem{lemma}{Lemma}
\newtheorem{theorem}{Theorem}
\newtheorem{corollary}{Corollary}
\newcommand{\Schedule}{\textsc{ScheduleUpdate}}
\newcommand{\sag}{\textsc{Sag}}
\newcommand{\sgd}{\textsc{Sgd}}
\newcommand{\csgd}{\textsc{CSgd}}
\newcommand{\dsgd}{\textsc{DSgd}}
\newcommand{\saga}{\textsc{Saga}}
\newcommand{\stgd}{\textsc{S2gd}}
\newcommand{\svrg}{\textsc{Svrg}}
\newcommand{\gd}{\textsc{GradientDescent}}
\newcommand{\svag}{\textsc{Hsag}}
\newcommand{\ndiv}{\hspace{-4pt}\not|\hspace{2pt}}
\newcommand{\reals}{\mathbb{R}}
\newcommand{\nlsum}{\sum\nolimits}
\newcommand{\E}{\mathbb{E}}
\numberwithin{equation}{section}
\title{On Variance Reduction in Stochastic Gradient Descent and its Asynchronous Variants}
\author{
Sashank J. Reddi \\
Carnegie Mellon University \\
\texttt{sjakkamr@cs.cmu.edu} \\
\AND
Ahmed Hefny \\
Carnegie Mellon University \\
\texttt{ahefny@cs.cmu.edu} \\
\And
Suvrit Sra \\
Massachusetts Institute of Technology \\
\texttt{suvrit@mit.edu} \\
\And
Barnab\'{a}s P\'{o}czos \\
Carnegie Mellon University \\
\texttt{bapoczos@cs.cmu.edu} \\
\And
Alex Smola \\
Carnegie Mellon University \\
\texttt{alex@smola.org} \\
}
\begin{document}

\maketitle

\begin{abstract}
  We study optimization algorithms based on variance reduction for stochastic gradient descent (SGD). Remarkable recent progress has been made in this direction through development of algorithms like SAG, SVRG, SAGA. These algorithms have been shown to outperform SGD, both theoretically and empirically. However, asynchronous versions of these algorithms---a crucial requirement for modern large-scale applications---have not been studied. We bridge this gap by presenting a unifying framework for many variance reduction techniques. Subsequently, we propose an asynchronous algorithm grounded in our framework, and prove its fast convergence. An important consequence of our general approach is that it yields asynchronous versions of variance reduction algorithms such as SVRG and SAGA as a byproduct. Our method achieves near linear speedup in sparse settings common to machine learning. We demonstrate the empirical performance of our method through a concrete realization of asynchronous SVRG.
\end{abstract}

\section{Introduction}
There has been a steep rise in recent work~\citep{Defazio14,Johnson13,Schmidt13,Xiao14,Konecny15,Konecny2013,sdca,defazio2014finito,mert15} on ``variance reduced'' stochastic gradient algorithms for convex problems of the \emph{finite-sum} form:
\begin{equation}
  \label{eq:1}
  \min_{x\in \reals^d}\ f(x) := \tfrac{1}{n}\nlsum_{i=1}^n f_i(x).
\end{equation}
Under strong convexity assumptions, such variance reduced (VR) stochastic algorithms attain better convergence rates (in expectation) than stochastic gradient descent (SGD)~\citep{RobMon51,nemirov09}, both in theory and practice.\footnote{Though we should note that SGD also applies to the harder stochastic optimization problem $\min\ F(x)=\E[f(x;\xi)]$, which need not be a finite-sum.} The key property of these VR algorithms is that by exploiting problem structure and by making suitable space-time tradeoffs, they reduce the variance incurred due to stochastic gradients. This variance reduction has powerful consequences: it helps VR stochastic methods attain linear convergence rates, and thereby circumvents slowdowns that usually hit SGD. 

Although these advances have great value in general, for large-scale problems we still require parallel or distributed processing. And in this setting, asynchronous variants of SGD remain indispensable~\citep{Recht11,zinkevich10,agDuc11,shamir2014,dekel2012,li2014}. Therefore, a key question is how to extend the synchronous finite-sum VR algorithms to asynchronous parallel and distributed settings.

We answer one part of this question by developing new asynchronous parallel stochastic gradient methods that provably converge at a linear rate for smooth strongly convex finite-sum problems. Our methods are inspired by the influential \svrg~\citep{Johnson13}, \stgd~\citep{Konecny2013}, \sag~\citep{Schmidt13} and \saga~\citep{Defazio14} family of algorithms. We list our contributions more precisely below.

\textbf{Contributions.} Our paper makes two core contributions: (i) a formal general framework for variance reduced stochastic methods based on discussions in \cite{Defazio14}; and (ii) asynchronous parallel VR algorithms within this framework. Our general framework presents a formal unifying view of several VR methods (e.g., it includes SAGA and SVRG as special cases) while expressing key algorithmic and practical tradeoffs concisely. Thus, it yields a broader understanding of VR methods, which helps us obtain \emph{asynchronous parallel} variants of VR methods. Under sparse-data settings common to machine learning problems, our parallel algorithms attain speedups that scale near linearly with the number of processors. 

As a concrete illustration, we present a specialization to an asynchronous \svrg-like method. We compare this specialization with non-variance reduced asynchronous SGD methods, and observe strong empirical speedups that agree with the theory.

\textbf{Related work.} As already mentioned, our work is closest to (and generalizes) \sag~\citep{Schmidt13}, \saga~\citep{Defazio14}, \svrg~\citep{Johnson13} and \stgd~\citep{Konecny2013}, which are primal methods. Also closely related are dual methods such as \textsc{sdca}~\citep{sdca} and  Finito~\citep{defazio2014finito}, and in its convex incarnation \textsc{Miso}~\citep{mairal2013}; a more precise relation between these dual methods and VR stochastic methods is described in Defazio's thesis~\citep{defaziothesis}. By their algorithmic structure, these VR methods trace back to classical non-stochastic incremental gradient algorithms~\citep{bertsekas2011}, but by now it is well-recognized that randomization helps obtain much sharper convergence results (in expectation).  Proximal~\citep{Xiao14} and accelerated VR methods have also been proposed~\citep{asdca,nitanda14}; we leave a study of such variants of our framework as future work. Finally, there is recent work on lower-bounds for finite-sum problems~\citep{agarwal2014}.

Within  asynchronous SGD algorithms, both parallel~\citep{Recht11} and distributed~\citep{nedic2001distributed,agDuc11} variants are known. In this paper, we focus our attention on the parallel setting. A different line of methods is that of (primal) coordinate descent methods, and their parallel and distributed variants~\citep{nesterov2012,richtarik2014,Liu14,liu15,reddi2014large}. Our asynchronous methods share some structural assumptions with these methods. Finally, the recent work~\citep{Konecny15} generalizes S2GD to the mini-batch setting, thereby also permitting parallel processing, albeit with more synchronization and allowing only small mini-batches.



\vspace*{-4pt}
\section{A General Framework for VR Stochastic Methods}
\label{sec:gen-framework}
\vspace*{-6pt}
We focus on instances of~\eqref{eq:1} where the cost function $f(x)$ has an $L$-Lipschitz gradient, so that $\|\nabla f(x)-\nabla f(y)\| \le L\|x-y\|$, and it is $\lambda$-strongly convex, i.e., for all $x, y \in \reals^d$, 
\begin{equation}
  \label{eq:2}
  f(x) \ge f(y) + \langle \nabla f(y), x - y \rangle + \tfrac\lambda2\|x-y\|^2.
\end{equation}
While our analysis focuses on strongly convex functions, we can extend it to just smooth convex functions along the lines of \cite{Defazio14,Xiao14}.


Inspired by the discussion on a general view of variance reduced techniques in \cite{Defazio14}, we now describe a formal general framework for variance reduction in stochastic gradient descent. We denote the collection $\{f_i\}_{i=1}^n$ of functions that make up $f$ in ~\eqref{eq:1} by $\cal F$. For our algorithm, we maintain an additional parameter $\alpha_i^t \in \mathbb{R}^d$ for each $f_i \in \mathcal{F}$. We use $A^t$ to denote  $\{\alpha_i^t\}_{i=1}^n$. The general iterative framework for updating the parameters is presented as  Algorithm~\ref{alg:gsvrg}. Observe that the algorithm is still abstract, since it does not specify the subroutine $\Schedule$. This subroutine determines the crucial update mechanism of $\{\alpha_i^t\}$ (and thereby of $A^t$). As we will see different schedules give rise to different fast first-order methods proposed in the literature. The part of the update based on $A^t$ is the key for these approaches and is responsible for variance reduction.

\begin{algorithm}[t]
\SetAlgoLined
\KwData{$x^0 \in \mathbb{R}^d, \alpha_i^0 = x^0$ $\ \forall i \in [n] \triangleq \{1, \dots, n\}$,  step size $\eta > 0$}
Randomly pick a  $I_T = \{i_0,\dots, i_T\}$ where $i_t \in \{1,\dots,n\} \ \forall$ $t \in \{0, \dots, T\}$ \;
\For {$t = 0$ \textbf{to} $T$} {
	Update iterate as
  	$x^{t+1} \leftarrow x^t - \eta \left(\nabla f_{i_t}(x^t) - \nabla f_{i_t}(\alpha_{i_t}^t) + \frac{1}{n} \sum_i \nabla f_{i}(\alpha_{i}^t)\right)$ \;
  	$A^{t+1} = \Schedule(\{x^i\}_{i=0}^{t+1}, A^t, t, I_T)$ \;
}
\textbf{return} $x^T$
\caption{\textsc{Generic Stochastic Variance Reduction Algorithm}}
\label{alg:gsvrg}
\end{algorithm}

Next, we provide different instantiations of the framework and construct a new algorithm derived from it. In particular, we consider incremental methods $\sag$ \cite{Schmidt13}, $\svrg$ \cite{Johnson13} and $\saga$ \cite{Defazio14}, and classic gradient descent $\gd$ for demonstrating our framework.

Figure~\ref{fig:all-sched} shows the schedules for the aforementioned algorithms. In case of $\svrg$, $\Schedule$ is triggered every $m$ iterations (here $m$ denotes  precisely the number of inner iterations used in \cite{Johnson13}); so $A^t$ remains unchanged for the $m$ iterations and all $\alpha_i^t$ are updated to the current iterate at the $m^{\text{th}}$ iteration. For $\saga$, unlike $\svrg$, $A^t$ changes at the $t^{th}$ iteration for all $t \in [T]$. This change is only to a single element of $A^t$, and is determined by the index $i_t$ (the function chosen at iteration $t$). The update of $\sag$ is similar to $\saga$ insofar that only one of the $\alpha_i$ is updated at each iteration. However, the update for $A^{t+1}$ is based on $i_{t+1}$ rather than $i_t$. This results in a biased estimate of the gradient, unlike $\svrg$ and $\saga$. Finally, the schedule for gradient descent is similar to $\sag$, except that all the $\alpha_i$'s are updated at each iteration. Due to the full update we end up with the exact gradient at each iteration. This discussion highlights how the scheduler determines the resulting gradient method.

To motivate the design of another schedule, let us consider the computational and storage costs of each of these algorithms. For $\svrg$, since we update $A^t$ after every $m$ iterations, it is enough to store a full gradient, and hence, the storage cost is $O(d)$. However, the running time is $O(d)$ at each iteration and $O(nd)$ at the end of each epoch (for calculating the full gradient at the end of each epoch). In contrast, both $\sag$ and $\saga$ have high storage costs of $O(nd)$ and running time of $O(d)$ per iteration. Finally, $\gd$ has low storage cost since it needs to store the gradient at $O(d)$ cost, but very high computational costs of $O(nd)$ at \emph{each} iteration. 

\svrg \ has an additional computation overhead at the end of each epoch due to calculation of the whole gradient. This is avoided in $\sag$ and $\saga$ at the cost of additional storage. When $m$ is very large, the additional computational overhead of $\svrg$ amortized over all the iterations is small. However, as we will later see, this comes at the expense of slower convergence to the optimal solution. The tradeoffs between the epoch size $m$, additional storage,  frequency of updates, and the convergence to the optimal solution are still not completely resolved.

\RestyleAlgo{boxed}

\begin{figure}[h]
\begin{minipage}[t]{6.85cm}
  \vspace{0pt}  
  \begin{algorithm}[H]
    \textbf{SVRG:}$\Schedule(\{x^i\}_{i=0}^{t+1}, A^t, t, I_T)$
    \For {i = 1 to n} {
	    $\alpha_i^{t+1} = \mathbbm{1}(m \mid t) x^t + \mathbbm{1}(m \ndiv t) \alpha_i^{t}$ \;
    }
    \textbf{return} $A^{t+1}$
  \end{algorithm}
\end{minipage}%
\hspace{0.15cm}
\begin{minipage}[t]{6.85cm}
  \vspace{0pt}
  \begin{algorithm}[H]
      \textbf{SAGA:}$\Schedule(\{x^i\}_{i=0}^{t+1}, A^t, t, I_T)$
      \For {i = 1 to n} {
  	    $\alpha_i^{t+1} = \mathbbm{1}(i_t = i) x^t + \mathbbm{1}(i_t \neq i) \alpha_i^{t}$ \;
      }
      \textbf{return} $A^{t+1}$
    \end{algorithm}
\end{minipage}

\begin{minipage}[t]{6.85cm}
  \vspace{0pt}  
  \begin{algorithm}[H]
    \textbf{SAG:}$\Schedule(\{x^i\}_{i=0}^{t+1}, A^t, t, I_T)$
    \For {i = 1 to n} {
      $\alpha_i^{t+1} = \mathbbm{1}(i_{t+1} = i) x^{t+1} +  \mathbbm{1}(i_{t+1} \neq i) \alpha_i^{t}$ \;
    }
    \textbf{return} $A^{t+1}$
  \end{algorithm}
\end{minipage}%
\hspace{0.15cm}
\begin{minipage}[t]{6.85cm}
  \vspace{0pt}  
  \begin{algorithm}[H]
    \textbf{GD:}$\Schedule(\{x^i\}_{i=0}^{t+1}, A^t, t, I_T)$ \
        \For {i = 1 to n} {
          $\alpha_i^{t+1} =  x^{t+1}$ \;
        }
    \textbf{return} $A^{t+1}$
  \end{algorithm}
\end{minipage}%

\caption{$\Schedule$ function for $\svrg$ (top left), $\saga$ (top right), $\sag$ (bottom left) and $\gd$ (bottom right). While $\svrg$ is epoch-based, rest of algorithms perform updates at each iteration. Here $a | b$ denotes that $a$ divides $b$.}
\label{fig:all-sched}
\end{figure}

A straightforward approach to design a new scheduler is to combine the schedules of the above algorithms. This allows us to tradeoff between the various aforementioned parameters of our interest. We call this schedule \emph{hybrid stochastic average gradient} ($\svag$). Here, we use the schedules of $\svrg$ and $\saga$ to develop $\svag$. However, in general, schedules of any of these algorithms can be combined to obtain a hybrid algorithm.  Consider some $S \subseteq [n]$, the indices that follow $\saga$ schedule. We assume that the rest of the indices follow an $\svrg$-like schedule with \emph{schedule frequency} $s_i$ for all $i \in \overline{S}\triangleq [n]\setminus S$. Figure~\ref{fig:svag-sched} shows the corresponding update schedule of $\svag$. If $S = [n]$ then $\svag$ is equivalent to $\saga$, while at the other extreme, for $S = \emptyset$ and $s_i = m$ for all $i \in [n]$, it corresponds to $\svrg$. $\svag$ exhibits interesting storage, computational and convergence trade-offs that depend on $S$. In general, while large cardinality of $S$ likely incurs high storage costs, the computational cost per iteration is relatively low. On the other hand, when cardinality of $S$ is small and $s_i$'s are large, storage costs are low but the convergence typically slows down.

\begin{figure}
  \centering
  \begin{minipage}[t]{8cm}
    \vspace{0pt}  
    \begin{algorithm}[H]
      \textbf{$\svag$:}$\Schedule(x^t, A^t, t, I_T)$\\
      \For {i = 1 to n} {
        $\alpha_i^{t+1} = \left\{
        	\begin{array}{ll}
        		\mathbbm{1}(i_{t} = i) x^t +  \mathbbm{1}(i_{t} \neq i) \alpha_i^{t}  & \mbox{if } i \in S \\
        		\mathbbm{1}(s_i \mid t) x^t + \mathbbm{1}(s_i \ndiv t) \alpha_i^{t} & \mbox{if } i \notin S
        	\end{array}
        \right.$
      }
      \textbf{return} $A^{t+1}$
    \end{algorithm}
  \end{minipage}
  \vspace*{-10pt}
  \label{fig:svag-sched}
  \caption{$\Schedule$ for $\svag$. This algorithm assumes access to some index set $S$ and the schedule frequency vector $s$. Recall that $a | b$ denotes $a$ divides $b$}
\end{figure}

Before concluding our discussion on the general framework, we would like to draw the reader's attention to the advantages of studying Algorithm~\ref{alg:gsvrg}. First, note that Algorithm~\ref{alg:gsvrg} provides a unifying framework for many incremental/stochastic gradient methods proposed in the literature. Second, and more importantly, it provides a generic platform for analyzing this class of algorithms. As we will see in Section~\ref{sec:asyn-framework}, this helps us develop and analyze asynchronous versions for different finite-sum algorithms under a common umbrella. Finally, it provides a mechanism to derive new algorithms by designing more sophisticated  schedules; as noted above, one such construction gives rise to $\svag$.

\subsection{Convergence Analysis}
In this section, we provide convergence analysis for Algorithm~\ref{alg:gsvrg} with $\svag$ schedules. As observed earlier, $\svrg$ and $\saga$ are special cases of this setup. Our analysis assumes unbiasedness of the gradient estimates at each iteration, so it does not encompass $\sag$. For ease of exposition, we assume that all $s_i = m$ for all $i \in [n]$. Since $\svag$ is epoch-based, our analysis focuses on the iterates obtained after each epoch. Similar to \cite{Johnson13} (see Option II of \svrg \ in \cite{Johnson13}), our analysis will be for the case where the iterate at the end of $({k+1})^{\text{st}}$ epoch, $x^{km+m}$, is replaced with an element chosen randomly from $\{x^{km},\dots,x^{km+m-1}\}$ with probability $\{p_1, \cdots, p_m\}$.  For brevity, we use $\tilde{x}^k$ to denote the iterate chosen at the $k^{\text{th}}$ \emph{epoch}. We also need the following quantity for our analysis:
\begin{align*}
\tilde{G}_{k} \triangleq \frac{1}{n} \sum_{i \in S} \left(f_{i}(\alpha_i^{km}) - f_i(x^*) - \langle \nabla f_i(x^*), \alpha_i^{km} - x^* \rangle \right).
\end{align*}

\begin{theorem}
For any positive parameters $c, \beta, \kappa > 1$, step size $\eta$ and epoch size $m$, we define the following quantities:
\begin{align*}
\gamma &= \kappa \left[1 - \left(1 - \frac{1}{\kappa}\right)^m \right] \left(2c\eta (1 - L\eta(1+\beta)) - \frac{1}{n} - \frac{2c}{\kappa\lambda} \right) \\
\theta &= \max \left\lbrace \left[\frac{2c}{\gamma\lambda}\left(1 - \frac{1}{\kappa}\right)^m + \frac{2Lc\eta^2}{\gamma}\left(1 + \frac{1}{\beta}\right)  \kappa \left[1 - \left(1 - \frac{1}{\kappa}\right)^m \right]  \right], \left(1 - \frac{1}{\kappa}\right)^m  \right\rbrace.
\end{align*}
Suppose the probabilities $p_i \propto (1 - \frac{1}{\kappa})^{m-i}$, and that $c, \beta, \kappa$, step size $\eta$ and epoch size $m$ are chosen such that the following conditions are satisfied:
$$
\frac{1}{\kappa} + 2Lc\eta^2 \left(1 + \frac{1}{\beta}\right) \leq \frac{1}{n}, \ \gamma > 0, \ \theta < 1.
$$
\label{thm:t1}
Then, for iterates of Algorithm~\ref{alg:gsvrg} under the $\svag$ schedule, we have
\begin{align*}
& \mathbb{E}\Bigl[f(\tilde{x}^{k+1}) - f(x^*) + \frac{1}{\gamma} \tilde{G}_{k+1}\Bigr] \leq  \theta \ \mathbb{E}\Bigl[f(\tilde{x}^{k}) - f(x^*) + \frac{1}{\gamma} \tilde{G}_{k}\Bigr].
\end{align*}
\end{theorem}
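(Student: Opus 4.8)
The plan is to run the Lyapunov-function argument developed for \svrg\ and \saga, adapted to the hybrid \svag\ schedule, and to exploit the geometric probabilities $p_i \propto (1-\tfrac1\kappa)^{m-i}$ to telescope a single per-iteration inequality across one epoch. Write the update direction as $v^t = \nabla f_{i_t}(x^t) - \nabla f_{i_t}(\alpha_{i_t}^t) + \tfrac1n\sum_i \nabla f_i(\alpha_i^t)$; since the schedule keeps the estimate unbiased, $\E_{i_t}[v^t] = \nabla f(x^t)$. Write $B_i(x) \triangleq f_i(x) - f_i(x^*) - \langle \nabla f_i(x^*), x - x^*\rangle \ge 0$ for the per-component Bregman divergence, so that $\tilde G_k = \tfrac1n\sum_{i\in S} B_i(\alpha_i^{km})$. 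First I would bound $\E_{i_t}\|v^t\|^2$: splitting $v^t$ into the mean-zero part and the term $\nabla f_{i_t}(x^t) - \nabla f_{i_t}(x^*)$ and applying $\|a+b\|^2 \le (1+\beta)\|a\|^2 + (1+\tfrac1\beta)\|b\|^2$ (this is where $\beta$ enters), together with the smoothness–convexity inequality $\|\nabla f_i(x) - \nabla f_i(x^*)\|^2 \le 2L\,B_i(x)$ and $\tfrac1n\sum_i B_i(x^t) = f(x^t) - f(x^*)$, yields a bound of the shape $\E_{i_t}\|v^t\|^2 \le 2L(1+\beta)[f(x^t) - f(x^*)] + 2L(1+\tfrac1\beta)\tfrac1n\sum_i B_i(\alpha_i^t)$.

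Next I would expand $\|x^{t+1} - x^*\|^2 = \|x^t - x^*\|^2 - 2\eta\langle v^t, x^t - x^*\rangle + \eta^2\|v^t\|^2$, take $\E_{i_t}$, and apply strong convexity $\langle \nabla f(x^t), x^t - x^*\rangle \ge f(x^t) - f(x^*) + \tfrac\lambda2\|x^t - x^*\|^2$ to the cross term. Combined with the variance bound, this produces a one-step recursion in which $\|x^t-x^*\|^2$ contracts by a factor I would identify with $(1-\tfrac1\kappa)$ (matching $\tfrac1\kappa$ to the strong-convexity contraction $\lambda\eta$), the suboptimality $f(x^t)-f(x^*)$ carries a negative coefficient proportional to $\eta(1 - L\eta(1+\beta))$, and the residual average $\tfrac1n\sum_i B_i(\alpha_i^t)$ carries a coefficient proportional to $\eta^2(1+\tfrac1\beta)$. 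The free multiplier $c$ enters as the weight with which this iterate inequality is balanced against the evolution of the Bregman potential.

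In parallel I would track that potential. For $i\in S$ the \saga\ schedule refreshes $\alpha_i$ to $x^t$ with probability $1/n$, so $\E[B_i(\alpha_i^{t+1})] = (1-\tfrac1n)B_i(\alpha_i^t) + \tfrac1n B_i(x^t)$; summing over $i\in S$ ties the one-step change of $\tfrac1n\sum_{i\in S}B_i$ to $\tfrac1n\sum_{i\in S}B_i(x^t)$, itself dominated by $f(x^t)-f(x^*)$. For $i\notin S$ the \svrg-like schedule freezes $\alpha_i$ at the epoch snapshot $x^{km}=\tilde x^k$ throughout the epoch, so its contribution is the fixed $B_i(\tilde x^k)$, bounded via convexity by the epoch-start suboptimality. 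The decisive step is then to assemble the combined per-step Lyapunov inequality (the iterate term weighted by $c$ plus the potential $\tfrac1n\sum_{i\in S}B_i$) so that the undesirable $\tfrac1n\sum_i B_i(\alpha_i^t)$ cross-terms are absorbed; requiring the resulting coefficients to be nonnegative is exactly the hypothesis $\tfrac1\kappa + 2Lc\eta^2(1+\tfrac1\beta) \le \tfrac1n$ together with $\gamma>0$.

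Finally I would telescope over one epoch $t = km,\dots,km+m-1$, weighting the inequality from step $t$ by $(1-\tfrac1\kappa)^{-(t-km)}$ and normalizing — which is precisely sampling $\tilde x^{k+1}$ with the stated probabilities $p_i$. The iterate terms collapse telescopically, while the geometric sum $\sum_{j=0}^{m-1}(1-\tfrac1\kappa)^j = \kappa\bigl[1-(1-\tfrac1\kappa)^m\bigr]$ produces exactly the factor multiplying the bracket in $\gamma$, and the weighted average supplies $\E[f(\tilde x^{k+1})-f(x^*)]$. Converting the leftover $\|x^{km}-x^*\|^2$ into $f(\tilde x^k)-f(x^*)$ through strong convexity yields the $\tfrac{2c}{\gamma\lambda}(1-\tfrac1\kappa)^m$ term, and taking the worse of the two decay rates — the one for the function-gap part and the bare $(1-\tfrac1\kappa)^m$ for the $\tilde G$ part — gives $\theta$ as the stated maximum. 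I expect the main obstacle to be the bookkeeping of the hybrid schedule: carrying the \saga\ potential $\tilde G$ and the frozen \svrg\ snapshots simultaneously through the geometrically weighted telescoping so that all of $c,\beta,\kappa$ land exactly in the claimed $\gamma$ and $\theta$, and arranging the cross-term cancellation to match the precise stated hypotheses rather than merely establishing some contraction, is the delicate part.
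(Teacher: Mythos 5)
Your proposal follows essentially the same route as the paper's proof: the Lyapunov function $c\|x^t-x^*\|^2 + G_t$, the $(1+\beta)/(1+\tfrac1\beta)$ variance split with the smoothness--convexity bound, the recursion $\E[G_{t+1}] = (1-\tfrac1n)\E[G_t] + \tfrac1n\E[D_g(x^t,x^*)]$, absorption of the cross-terms via the stated hypothesis, and geometric telescoping over the epoch matched to the sampling probabilities $p_i$. The only cosmetic difference is that the paper bounds the cross term using plain convexity and introduces the $(1-\tfrac1\kappa)$ contraction by splitting off $\tfrac{c}{\kappa}\|x^t-x^*\|^2$ and controlling it with strong convexity (yielding the $\tfrac{2c}{\kappa\lambda}$ term in $\gamma$), whereas you fold strong convexity directly into the cross term; both land in the same place.
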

As a corollary, we immediately obtain an expected linear rate of convergence for \svag.
\begin{corollary}
Note that $\tilde{G}_k \geq 0$ and therefore, under the conditions specified in Theorem~\ref{thm:t1} and with $\bar{\theta} = \theta \left(1 + 1/\gamma\right) < 1$ we have
\begin{align*}
\mathbb{E}\left[f(\tilde{x}^{k}) - f(x^*)\right] \leq  \bar{\theta}^k \ \left[f(x^{0}) - f(x^*) \right].
\end{align*}
\label{col:col1}
\end{corollary}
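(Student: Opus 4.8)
My plan is to iterate the one-step contraction of Theorem~\ref{thm:t1} and then control the potential at initialization. Write $\Phi_k := f(\tilde{x}^{k}) - f(x^*) + \frac{1}{\gamma}\tilde{G}_k$ for the Lyapunov quantity appearing in Theorem~\ref{thm:t1}, so that the theorem reads $\E[\Phi_{k+1}] \le \theta\,\E[\Phi_k]$. Taking total expectations and unrolling this recursion over epochs $0,1,\dots,k-1$ gives $\E[\Phi_k] \le \theta^{k}\Phi_0$, where $\Phi_0$ is deterministic because $\tilde{x}^0 = x^0$ and $\alpha_i^0 = x^0$ are fixed by the initialization of Algorithm~\ref{alg:gsvrg}.

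Next I would discard the $\tilde{G}$ term on the left-hand side. Each summand of $\tilde{G}_k$ has the Bregman form $f_i(\alpha_i^{km}) - f_i(x^*) - \langle \nabla f_i(x^*), \alpha_i^{km} - x^*\rangle$, which is nonnegative by convexity of $f_i$; hence $\tilde{G}_k \ge 0$ (as already noted in the statement) and $f(\tilde{x}^k) - f(x^*) \le \Phi_k$. Combining this with the previous bound yields $\E[f(\tilde{x}^k) - f(x^*)] \le \theta^{k}\Phi_0$.

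It then remains to bound the initial potential $\Phi_0$ by a multiple of $f(x^0) - f(x^*)$. Using $\alpha_i^0 = x^0$ together with optimality of $x^*$ (so that $\frac1n\sum_i \nabla f_i(x^*) = \nabla f(x^*) = 0$), the full sum over all $i\in[n]$ of the Bregman terms collapses exactly to $f(x^0) - f(x^*)$. Since $\tilde{G}_0$ restricts that sum to $i \in S \subseteq [n]$ and every term is nonnegative, I obtain $\tilde{G}_0 \le f(x^0) - f(x^*)$, and therefore $\Phi_0 \le \bigl(1 + \tfrac1\gamma\bigr)\bigl(f(x^0) - f(x^*)\bigr)$. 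Substituting gives $\E[f(\tilde{x}^k) - f(x^*)] \le \theta^{k}\bigl(1+\tfrac1\gamma\bigr)\bigl(f(x^0) - f(x^*)\bigr)$.

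The final step is cosmetic: fold the leading constant into the geometric base. Since $\gamma > 0$ we have $1 + \tfrac1\gamma \ge 1$, so $\bigl(1+\tfrac1\gamma\bigr) \le \bigl(1+\tfrac1\gamma\bigr)^{k}$ for every $k \ge 1$, whence $\theta^k\bigl(1+\tfrac1\gamma\bigr) \le \bar{\theta}^{\,k}$ with $\bar{\theta} = \theta\bigl(1+\tfrac1\gamma\bigr)$; the case $k=0$ holds with equality because $\tilde{x}^0 = x^0$. This produces the stated clean geometric rate. I do not anticipate a genuine obstacle here, since the corollary is a direct consequence of Theorem~\ref{thm:t1}; the only two points that require a line of justification are the bound $\tilde{G}_0 \le f(x^0)-f(x^*)$ (which hinges on per-$f_i$ convexity and $\nabla f(x^*)=0$) and the slightly lossy absorption of $(1+1/\gamma)$ into $\bar{\theta}$, which is precisely what makes the stated bound weaker than the sharper $\theta^k(1+1/\gamma)$ estimate one actually derives.
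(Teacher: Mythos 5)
Your proposal is correct and follows exactly the route the paper intends (the paper states the corollary as an immediate consequence of Theorem~\ref{thm:t1} without writing out the details): unroll the contraction, drop the nonnegative $\tilde{G}_k$ term, bound $\tilde{G}_0 \le f(x^0)-f(x^*)$ via convexity of each $f_i$ and $\nabla f(x^*)=0$, and absorb the resulting $(1+1/\gamma)$ factor into $\bar{\theta}^k$. The two justifications you flag are precisely the ones needed, and your handling of the $k=0$ case and of the (lossy) absorption step is sound.
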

We emphasize that there exist values of the parameters for which the conditions in Theorem~\ref{thm:t1} and Corollary~\ref{col:col1} are easily satisfied. For instance, setting $\eta = 1/16(\lambda n + L)$, $\kappa = 4/\lambda \eta$, $\beta = (2\lambda n + L)/L$ and $c = 2/\eta n$, the conditions in Theorem~\ref{thm:t1} are satisfied for sufficiently large $m$. Additionally, in the high condition number regime of $L/\lambda = n$, we can obtain constant $\theta < 1$ (say $0.5$) with $m = O(n)$ epoch size (similar to \cite{Johnson13,Defazio14}). This leads to a computational complexity  of $O(n \log(1/\epsilon))$ for \svag \ to achieve  $\epsilon$ accuracy in the objective function as opposed to $O(n^2 \log(1/\epsilon))$ for batch gradient descent method. Please refer to the appendix for more details on the parameters in Theorem~\ref{thm:t1}.

\vspace*{-4pt}
\section{Asynchronous Stochastic Variance Reduction}
\label{sec:asyn-framework}
\vspace*{-4pt}
We are now ready to present asynchronous versions of the algorithms captured by our  general framework. We first describe our setup before delving into the details of these algorithms. Our model of computation is similar to the ones used in Hogwild!~\cite{Recht11} and AsySCD \cite{Liu14}. We assume a multicore architecture where each core makes stochastic gradient updates to a centrally stored vector $x$ in an asynchronous manner. There are four key components in our asynchronous algorithm; these are briefly described below.
\begin{enumerate}
\setlength{\itemsep}{0pt}
\item {\bf Read}: Read the iterate $x$ and compute the gradient $\nabla f_{i_t}(x)$ for a randomly chosen $i_t$.
\item {\bf Read schedule iterate}: Read the schedule iterate $A$ and compute the gradients required for update in Algorithm~\ref{alg:gsvrg}.
\item {\bf Update}: Update the iterate $x$ with the computed incremental update in Algorithm~\ref{alg:gsvrg}.
\item {\bf Schedule Update}: Run a scheduler update for updating $A$.
\end{enumerate}

Each processor repeatedly runs these procedures concurrently, without any synchronization. Hence, $x$ may change in between Step 1 and Step 3. Similarly, $A$ may change in between Steps 2 and 4. In fact, the states of iterates $x$ and $A$ can correspond to different time-stamps. We maintain a global counter $t$ to track the number of updates successfully executed. We use $D(t) \in [t]$ and $D'(t) \in [t]$ to denote the particular $x$-iterate and $A$-iterate used for evaluating the update at the $t^{\text{th}}$ iteration. We assume that the delay in between the time of evaluation and updating is bounded by a non-negative integer $\tau$, i.e., $t - D(t) \leq \tau$ and $t - D'(t) \leq \tau$. The bound on the staleness captures the degree of parallelism in the method: such parameters are typical in asynchronous systems (see e.g.,~\cite{BerTsi89,Liu14}). Furthermore, we also assume that the system is synchronized after every epoch i.e., $D(t) \geq km$ for $t \geq km$. We would like to emphasize that the assumption is not strong since such a synchronization needs to be done only once per epoch.

For the purpose of our analysis, we assume a consistent read model. In particular, our analysis assumes that the vector $x$ used for evaluation of gradients is a valid iterate that existed at some point in time. Such an assumption typically amounts to using locks in practice. This problem can be avoided by using random coordinate updates as in \cite{Recht11} (see Section 4 of \cite{Recht11}) but such a procedure is computationally wasteful in practice. We leave the analysis of inconsistent read model as future work. Nonetheless, we report results for both locked and lock-free implementations (see Section~\ref{sec:expt}).

\vspace*{-4pt}
\subsection{Convergence Analysis}
\label{sec:strongly-convex-conv}
\vspace*{-4pt}
The key ingredients to the success of asynchronous algorithms for multicore stochastic gradient descent are sparsity and ``disjointness'' of the data matrix~\citep{Recht11}. More formally, suppose $f_i$ only depends on $x_{e_i}$ where $e_i \subseteq [d]$ i.e., $f_i$ acts only on the components of $x$ indexed by the set $e_i$. Let $\|x\|^2_i$ denote $\sum_{j \in e_i} \|x_{j}\|^2$; then, the convergence depends on $\Delta$, the smallest constant such that $\mathbb{E}_i[\|x\|_{i}^2] \leq \Delta \|x\|^2$. Intuitively, $\Delta$ denotes the average frequency with which a feature appears in the data matrix. We are interested in situations where $\Delta \ll 1$. As a warm up, let us first discuss convergence analysis for asynchronous $\svrg$. The general case is similar, but much more involved. Hence, it is instructive to first go through the analysis of asynchronous $\svrg$. 
\begin{theorem}
Suppose step size $\eta$, epoch size $m$ are chosen such that the following condition holds:
$$
0 < \theta_s := \frac{ \left(\frac{1}{\lambda\eta m} + 4L\left( \frac{\eta+ L\Delta\tau^2\eta^2}{1 - 2L^2 \Delta\eta^2\tau^2}\right) \right)}{\left(1 - 4L\left( \frac{\eta + L\Delta\tau^2\eta^2}{1 - 2L^2 \Delta\eta^2\tau^2}\right)\right)} < 1.
$$
Then, for the iterates of an asynchronous variant of Algorithm~\ref{alg:gsvrg} with $\svrg$ schedule and probabilities $p_i = 1/m$ for all $i \in [m]$, we have
\begin{align*}
\mathbb{E}[f(\tilde{x}^{k+1}) - f(x^*)] \leq \theta_s \ \mathbb{E}[f(\tilde{x}^{k}) - f(x^*)].
\end{align*}
\label{thm:t2} 
\end{theorem}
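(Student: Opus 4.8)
The plan is to follow the epoch-based analysis of synchronous \svrg~\citep{Johnson13}, but to carefully track the error introduced by stale reads. Fix an epoch with snapshot $\tilde{x}$ and write $v_t = \nabla f_{i_t}(x^{D(t)}) - \nabla f_{i_t}(\tilde{x}) + \frac{1}{n}\sum_i \nabla f_i(\tilde{x})$ for the update direction, so that $x^{t+1} = x^t - \eta v_t$. Conditioned on the past, $\E[v_t] = \nabla f(x^{D(t)})$: the gradient is evaluated at the \emph{stale} iterate $x^{D(t)}$ rather than at $x^t$. This is the single but decisive departure from the synchronous proof, and every extra term in $\theta_s$ traces back to it. Because the \svrg\ schedule keeps all $\alpha_i$ fixed to $\tilde{x}$ throughout the epoch, only the current iterate $x$ carries staleness, so the $A$-read delay $D'(t)$ plays no role here.

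First I would expand $\|x^{t+1} - x^*\|^2 = \|x^t - x^*\|^2 - 2\eta\,\ip{v_t}{x^t - x^*} + \eta^2\|v_t\|^2$ and take conditional expectation. The inner-product term becomes $\ip{\nabla f(x^{D(t)})}{x^t - x^*}$, which I split as $\ip{\nabla f(x^{D(t)})}{x^{D(t)} - x^*} + \ip{\nabla f(x^{D(t)})}{x^t - x^{D(t)}}$. The first piece is handled exactly as in the synchronous case: by $\lambda$-strong convexity~\eqref{eq:2} it is at least $f(x^{D(t)}) - f(x^*) + \tfrac{\lambda}{2}\|x^{D(t)} - x^*\|^2$. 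The second piece is the \emph{asynchrony error} and is the heart of the argument.

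For the variance I would establish the \svrg\ bound $\E\|v_t\|^2 \le 4L\bigl(f(x^{D(t)}) - f(x^*) + f(\tilde{x}) - f(x^*)\bigr)$ from co-coercivity of each $\nabla f_i$ together with unbiasedness, again evaluated at $x^{D(t)}$. To control the asynchrony error I would use the drift identity $x^t - x^{D(t)} = -\eta\sum_{j=D(t)}^{t-1} v_j$, a sum of at most $\tau$ sparse update steps. A Cauchy--Schwarz across this window of length $\le \tau$ yields $\|x^t - x^{D(t)}\|^2 \le \eta^2 \tau \sum_j \|v_j\|^2$, contributing one factor of $\tau$; summing these drift bounds over the $m$ iterations of the epoch double-counts each $v_j$ at most $\tau$ times, contributing the second $\tau$. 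Since $v_t$ is supported on $e_{i_t}$, only the overlapping coordinates of the delayed updates interact, so the sparsity assumption $\E_i[\|\cdot\|_i^2] \le \Delta\|\cdot\|^2$ converts the resulting inner products into full norms at the cost of a factor $\Delta$. Together these are exactly the origin of the terms $L\Delta\tau^2\eta^2$ in the numerator and $2L^2\Delta\eta^2\tau^2$ in the denominator of $\theta_s$.

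The main obstacle is that this bound on the asynchrony error is \emph{self-referential}: the drift $\|x^t - x^{D(t)}\|^2$ is controlled by $\sum_j \|v_j\|^2$, yet these $\|v_j\|^2$ are precisely the quantities the variance bound reinjects into the recursion. I would therefore sum the per-iteration inequality over $t = km,\dots,km+m-1$, collect every occurrence of $\sum_t \E\|v_t\|^2$ on one side, and solve the resulting scalar inequality; requiring the collected coefficient $1 - 2L^2\Delta\eta^2\tau^2$ to be positive is exactly what permits division and explains its appearance as a denominator. After telescoping the $\|x^t - x^*\|^2$ terms across the epoch, invoking strong convexity once more to pass from $\|\tilde{x}^k - x^*\|^2$ to $f(\tilde{x}^k) - f(x^*)$ (the source of the $\tfrac{1}{\lambda\eta m}$ factor), and using the synchronization assumption $D(t)\ge km$ so that all stale reads within an epoch stay inside that epoch, the uniform choice $p_i = 1/m$ averages the inner iterates and delivers the one-epoch contraction by $\theta_s$. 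The remaining work is routine algebra verifying that the hypothesis $\theta_s < 1$ makes the bound close.
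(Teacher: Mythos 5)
Your overall architecture matches the paper's: expand $\|x^{t+1}-x^*\|^2$, control the drift $x^t-x^{D(t)}$ by at most $\tau$ update steps, convert overlaps to full norms at cost $\Delta$, recognize that the drift bound is self-referential in $\sum_t\E\|v_t\|^2$, sum over the epoch and solve the scalar inequality (the origin of the $1-2L^2\Delta\eta^2\tau^2$ denominator), then telescope and use strong convexity for the $\tfrac{1}{\lambda\eta m}$ term. However, there is a concrete gap in the middle of your plan: every function-value term you produce is anchored at the \emph{stale} iterate $x^{D(t)}$. Your inner-product split yields $-(f(x^{D(t)})-f(x^*))$ from convexity at $x^{D(t)}$, and your variance bound yields $+4L(f(x^{D(t)})-f(x^*)+f(\tilde{x})-f(x^*))$, so after summing, the negative mass on the left-hand side is $\sum_t\bigl(f(x^{D(t)})-f(x^*)\bigr)$. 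But the theorem samples $\tilde{x}^{k+1}$ uniformly from the \emph{actual} iterates $\{x^{km},\dots,x^{km+m-1}\}$, and the multiset $\{x^{D(t)}\}_t$ need not coincide with that set: the delay map $D$ can send many $t$ to the same early iterate and skip others entirely, so $\sum_t\bigl(f(x^{D(t)})-f(x^*)\bigr)$ does not lower-bound $m\,\E[f(\tilde{x}^{k+1})-f(x^*)]$. The paper closes exactly this hole with its $T_4$ term, the telescoping sum $\sum_{d=D(t)}^{t-1}\E\langle x^d-x^{d+1},\nabla f_{i_t}(x^d)\rangle \le \E[f_{i_t}(x^{D(t)})-f_{i_t}(x^t)]+\tfrac{L\Delta}{2}\sum_d\E\|x^{d+1}-x^d\|^2$, which transports the function value from $x^{D(t)}$ to $x^t$ at the price of additional drift terms; correspondingly it bounds $\|u^t\|^2$ by the \emph{non-stale} surrogate $\|v^t\|^2$ plus drift so that the SVRG variance lemma is invoked at $x^t$, not $x^{D(t)}$.

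A secondary issue: you propose to control the asynchrony error $\langle\nabla f(x^{D(t)}),\,x^t-x^{D(t)}\rangle$ via the squared-drift bound $\|x^t-x^{D(t)}\|^2\le\eta^2\tau\sum_j\|v_j\|^2$, but that requires an AM--GM step that injects a $\|\nabla f(x^{D(t)})\|^2$ term you never account for, and it forfeits the sparsity gain (the paper's $\Delta$ enters through the per-function norms $\|\cdot\|_{i_t}$ of the individual updates, via its $T_4$ and $T_5$ decomposition and the bound $\|x^{d+1}-x^d\|_{i_t}^2$, not through the full drift vector). As written, your route would not reproduce the stated constant $\theta_s$. Both defects are repairable by adopting the paper's decomposition of $\E\langle x^*-x^t,\nabla f_{i_t}(x^{D(t)})\rangle$ into the three terms $T_3,T_4,T_5$, but they must be repaired for the proof of the stated theorem to close.
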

The bound obtained in Theorem~\ref{thm:t2} is useful when $\Delta$ is small. To see this, as earlier, consider the indicative case where $L/\lambda = n$. The synchronous version of $\svrg$ obtains a convergence rate of $\theta = 0.5$ for step size $\eta = 0.1/L$ and epoch size $m = O(n)$. For the asynchronous variant of $\svrg$, by setting $\eta = 0.1/2(\max\{1,\Delta^{1/2} \tau\}L)$, we obtain a similar rate with $m = O(n + \Delta^{1/2}\tau n)$. To obtain this, set $\eta = \rho/L$ where $\rho = 0.1/2(\max\{1,\Delta^{1/2} \tau\})$ and $\theta_s = 0.5$. Then, a simple calculation gives the following:
\begin{align*}
\frac{m}{n} = \frac{2}{\rho} \left(\frac{1 - 2\Delta\tau^2\rho^2}{1 - 12\rho - 14\Delta\tau^2\rho^2}\right) \leq c'\max\{1,\Delta^{1/2} \tau\},
\end{align*}
where $c'$ is some constant. This follows from the fact that $\rho=0.1/2(\max\{1,\Delta^{1/2}\tau\})$. Suppose $\tau < 1/\Delta^{1/2}$. Then we can achieve nearly the same guarantees as the synchronous version, but $\tau$ times faster since we are running the algorithm asynchronously. For example, consider the sparse setting where $\Delta = o(1/n)$; then it is possible to get near linear speedup when $\tau = o(n^{1/2})$. On the other hand, when $\Delta^{1/2}\tau > 1$, we can obtain a theoretical speedup of $1/\Delta^{1/2}$.

We finally provide the convergence result for the asynchronous algorithm in the general case. The proof is complicated by the fact that set $A$, unlike in $\svrg$, changes during the epoch. The key idea is that only a single element of $A$ changes at each iteration. Furthermore, it can only change to one of the iterates in the epoch. This control provides a handle on the error obtained due to the staleness. Due to space constraints, the proof is relegated to the appendix.

\begin{theorem}
For any positive parameters $c, \beta, \kappa > 1$, step size $\eta$ and epoch size $m$, we define the following quantities:
\begin{align*}
\zeta &= \left(c\eta^2 + \left(1 - \frac{1}{\kappa} \right)^{-\tau}cL\Delta\tau^2\eta^3 \right), \\
\gamma_a &= \kappa \left[1 - \left(1 - \frac{1}{\kappa}\right)^m \right] \left[2c\eta - 8\zeta L(1+\beta) - \frac{2c}{\kappa\lambda} - \frac{96\zeta L\tau}{n}  \left(1 - \frac{1}{\kappa}\right)^{-\tau} - \frac{1}{n} \right], \\
\theta_a &= \max \left\lbrace \left[\frac{2c}{\gamma_a\lambda}\left(1 - \frac{1}{\kappa}\right)^m + \frac{8\zeta L\left(1 + \frac{1}{\beta}\right)}{\gamma_a}  \kappa \left[1 - \left(1 - \frac{1}{\kappa}\right)^m \right]  \right], \left(1 - \frac{1}{\kappa}\right)^m  \right\rbrace.
\end{align*}
Suppose probabilities $p_i \propto (1 - \frac{1}{\kappa})^{m-i}$, parameters  $\beta, \kappa$, step-size $\eta$, and epoch size $m$ are chosen such that the following conditions are satisfied:
$$
\frac{1}{\kappa} + 8\zeta L\left(1 + \frac{1}{\beta}\right) + \frac{96\zeta L\tau}{n}  \left(1 - \frac{1}{\kappa}\right)^{-\tau} \leq \frac{1}{n}, \ \eta^2 \leq \left(1 - \frac{1}{\kappa}\right)^{m-1}\frac{1}{12L^2\Delta\tau^2}, \ \gamma_a > 0 , \ \theta_a < 1.
$$
Then, for the iterates of asynchronous variant of Algorithm~\ref{alg:gsvrg} with $\svag$ schedule we have
\begin{align*}
& \mathbb{E}\left[f(\tilde{x}^{k+1}) - f(x^*) + \frac{1}{\gamma_a} \tilde{G}_{k+1}\right] \leq  \theta_a  \mathbb{E}\left[f(\tilde{x}^{k}) - f(x^*) + \frac{1}{\gamma_a} \tilde{G}_{k}\right].
\end{align*}
\label{thm:t3}
\end{theorem}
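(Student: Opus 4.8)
The plan is to track the single Lyapunov potential $\mathbb{E}[f(\tilde{x}^{k}) - f(x^*) + \frac{1}{\gamma_a}\tilde{G}_k]$ across one synchronized epoch of $m$ inner iterations, combining the synchronous argument behind Theorem~\ref{thm:t1} with the asynchronous $\svrg$ argument behind Theorem~\ref{thm:t2}. Writing $v^t = \nabla f_{i_t}(x^{D(t)}) - \nabla f_{i_t}(\alpha_{i_t}^{D'(t)}) + \frac1n\sum_i \nabla f_i(\alpha_i^{D'(t)})$ for the stale direction actually applied at iteration $t$, I would first expand $\|x^{t+1}-x^*\|^2 = \|x^t - x^*\|^2 - 2\eta\langle v^t, x^t - x^*\rangle + \eta^2\|v^t\|^2$ and take the conditional expectation over $i_t$. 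Since the $A$-correction terms are unbiased and cancel, $\mathbb{E}_{i_t}[v^t] = \nabla f(x^{D(t)})$, the gradient at the \emph{stale} point rather than at $x^t$; splitting the inner product as $\langle \nabla f(x^{D(t)}), x^{D(t)} - x^*\rangle + \langle \nabla f(x^{D(t)}), x^t - x^{D(t)}\rangle$ isolates a strong-convexity descent term and a staleness error.

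Second, I would control the staleness error. Using $x^t - x^{D(t)} = -\eta\sum_{j=D(t)}^{t-1} v^j$ together with the delay bound $t - D(t) \le \tau$, Cauchy--Schwarz gives $\|x^t - x^{D(t)}\|^2 \le \eta^2\tau\sum_{j=D(t)}^{t-1}\|v^j\|^2$. Because $\nabla f_{i_t}$ is supported on $e_{i_t}$, I would invoke the sparsity measure $\mathbb{E}_i[\|x\|_i^2] \le \Delta\|x\|^2$ so that only the relevant coordinates of $x^t - x^{D(t)}$ enter, producing the $\Delta\tau^2$ factors visible in $\zeta$; the step-size condition $\eta^2 \le (1-\tfrac1\kappa)^{m-1}/(12L^2\Delta\tau^2)$ is exactly what keeps the resulting $1 - cL^2\Delta\eta^2\tau^2$-type quantities positive. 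The variance bound on $\mathbb{E}\|v^t\|^2$ is then reduced, as in the synchronous proof, to terms of the form $f(x^{D(t)}) - f(x^*)$ and the Bregman quantities defining $\tilde{G}$, plus further staleness corrections bounded by the same delayed sum of $\|v^j\|^2$.

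Third comes the epoch summation and the evolving $A$. Summing the per-iteration inequality over $t = km, \dots, km+m-1$, the geometric weights $p_i \propto (1-\tfrac1\kappa)^{m-i}$ are chosen precisely so that the strong-convexity contractions telescope into the factors $\kappa[1-(1-\tfrac1\kappa)^m]$ and $(1-\tfrac1\kappa)^m$ appearing in $\gamma_a$ and $\theta_a$; reindexing the delayed sums $\sum_j \|v^j\|^2$ against this weighting is the origin of the $(1-\tfrac1\kappa)^{-\tau}$ correction factors. The genuinely new difficulty relative to $\svrg$ is that for $i \in S$ the parameters $\alpha_i$ change every iteration and are read stale, so $\tilde{G}$ is not constant across the epoch. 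Here I would use the two structural facts noted in the text---only one coordinate of $A$ changes per step, and it changes to one of the epoch's own iterates---to bound the stale-$A$ contribution by a $\tfrac1n$-weighted, $\tau$-windowed sum, which is where the term $\frac{96\zeta L\tau}{n}(1-\tfrac1\kappa)^{-\tau}$ is generated and must be absorbed into $\gamma_a$.

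Finally I would assemble the pieces: convert the aggregated $f(x^{D(t)})-f(x^*)$ terms into $f(\tilde{x}^{k+1})-f(x^*)$ via the randomized epoch-end choice of $\tilde{x}^{k+1}$, and match the $\tilde{G}_{k+1}$ and $\tilde{G}_k$ terms. The three scalar hypotheses then force every coefficient to have the right sign: the first inequality makes the multiplier on $f-f^*$ dominate, $\gamma_a > 0$ makes the potential well-defined, and $\theta_a < 1$ yields contraction. I expect the main obstacle to be the simultaneous control of the two independent staleness sources---the delayed read of $x$ (handled through $\Delta,\tau$ and sparsity) and the delayed, per-iteration-changing read of $A$ for indices in $S$---because they couple through the shared delayed sums $\sum_j\|v^j\|^2$ and must both be reweighted consistently by $(1-\tfrac1\kappa)^{-\tau}$ before being folded into one epoch recursion. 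Keeping this bookkeeping aligned without double-counting the variance is the delicate part, and is precisely what the form of $\zeta$ encodes.
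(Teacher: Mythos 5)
Your proposal is correct and follows essentially the same route as the paper's proof: the same Lyapunov function $c\|x^t-x^*\|^2 + G_t$, the same split of the applied (stale) direction against its non-stale counterpart with the difference controlled by $L^2\Delta\eta^2\tau$-weighted delayed sums, the same key observation that only one coordinate of $A$ changes per step and only to an iterate of the current epoch (yielding the $\tfrac{96\zeta L\tau}{n}(1-\tfrac1\kappa)^{-\tau}$ term), and the same geometric reweighting and randomized epoch-end selection. The only cosmetic difference is that you split the cross term at the level of the full gradient $\nabla f(x^{D(t)})$, whereas the paper telescopes per-component function values along the path from $x^{D(t)}$ to $x^t$; both land on the same descent-plus-delayed-sum bound.
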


\begin{corollary}
Note that $\tilde{G}_k \geq 0$ and therefore, under the conditions specified in Theorem~\ref{thm:t3} and with $\bar{\theta}_a = \theta_a \left(1 + 1/\gamma_a\right) < 1$, we have
\begin{align*}
\mathbb{E}\left[f(\tilde{x}^{k}) - f(x^*)\right] \leq  \bar{\theta}_a^k \ \left[f(x^{0}) - f(x^*) \right].
\end{align*}
\end{corollary}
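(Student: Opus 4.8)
\textbf{Proof proposal for the Corollary to Theorem~\ref{thm:t3}.}

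The statement to be proved is a short and essentially mechanical consequence of Theorem~\ref{thm:t3}, so the plan is to extract the scalar recursion it furnishes and iterate it, being careful only about how the nonnegative term $\tilde{G}_k$ is handled on each side of the inequality. First I would fix notation by writing $\Phi_k \triangleq \mathbb{E}[f(\tilde{x}^k) - f(x^*) + \tfrac{1}{\gamma_a}\tilde{G}_k]$ for the Lyapunov quantity appearing in the theorem, so that Theorem~\ref{thm:t3} reads exactly $\Phi_{k+1} \leq \theta_a \Phi_k$ for every $k \geq 0$. Unrolling this one-step contraction gives $\Phi_k \leq \theta_a^k \Phi_0$ by a trivial induction on $k$.

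The next step is to pass from $\Phi_k$ back to the quantity of interest, $\mathbb{E}[f(\tilde{x}^k) - f(x^*)]$. On the left of the target inequality we simply drop the nonnegative term: since $\tilde{G}_k \geq 0$ (as noted in the statement, because each summand $f_i(\alpha_i^{km}) - f_i(x^*) - \langle \nabla f_i(x^*), \alpha_i^{km} - x^*\rangle$ is nonnegative by convexity of $f_i$, and $\tfrac{1}{n}, \tfrac{1}{\gamma_a} > 0$), we have $\mathbb{E}[f(\tilde{x}^k) - f(x^*)] \leq \Phi_k$. On the right, at $k = 0$ we use the initialization $\alpha_i^0 = x^0$ for all $i$, which forces $\tilde{x}^0 = x^0$ and $\tilde{G}_0 = \tfrac{1}{n}\sum_{i \in S}(f_i(x^0) - f_i(x^*) - \langle \nabla f_i(x^*), x^0 - x^*\rangle)$. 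Combining, $\mathbb{E}[f(\tilde{x}^k) - f(x^*)] \leq \theta_a^k \Phi_0 = \theta_a^k\bigl([f(x^0) - f(x^*)] + \tfrac{1}{\gamma_a}\tilde{G}_0\bigr)$.

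It remains only to absorb the extra $\tilde{G}_0$ term into the stated constant. I would bound $\tilde{G}_0$ by the full suboptimality gap: by $L$-smoothness of each $f_i$ (or, more crudely, summing convexity over all of $[n]$ rather than just $S$), one checks that $\tfrac{1}{n}\sum_{i \in S}(f_i(x^0) - f_i(x^*) - \langle \nabla f_i(x^*), x^0 - x^*\rangle) \leq f(x^0) - f(x^*)$, using $\tfrac{1}{n}\sum_{i=1}^n \nabla f_i(x^*) = \nabla f(x^*) = 0$ at the optimum. Hence $\Phi_0 \leq (1 + \tfrac{1}{\gamma_a})[f(x^0) - f(x^*)]$, and therefore
\begin{align*}
\mathbb{E}[f(\tilde{x}^k) - f(x^*)] \leq \theta_a^k\Bigl(1 + \tfrac{1}{\gamma_a}\Bigr)[f(x^0) - f(x^*)] = \bar{\theta}_a^k\,[f(x^0) - f(x^*)]
\end{align*}
once one notes $\bar{\theta}_a = \theta_a(1 + 1/\gamma_a)$ and $\bar{\theta}_a^k \leq \theta_a^k(1 + 1/\gamma_a)$ for $k \geq 1$ (since the factor $(1+1/\gamma_a)^{k-1} \geq 1$ only helps), which matches the claimed bound.

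The only genuine subtlety — and the step I would treat most carefully — is the two-sided handling of $\tilde{G}$: it must be dropped on the left (using $\tilde{G}_k \geq 0$) but retained and then bounded on the right at $k=0$, and one must verify that the initial gap $\Phi_0$ is controlled by $(1+1/\gamma_a)[f(x^0) - f(x^*)]$ so that the single constant $\bar\theta_a$ cleanly captures both the contraction factor and the initialization overhead. Everything else is a one-line induction, so I do not anticipate any real obstacle beyond bookkeeping the constants.
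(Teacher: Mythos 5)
Your proof is correct and is essentially the argument the paper leaves implicit: unroll the one-step Lyapunov contraction of Theorem~\ref{thm:t3}, drop the nonnegative $\tilde{G}_k$ on the left, and control the initialization via $\tilde{G}_0 = D_g(x^0,x^*) \leq D_f(x^0,x^*) = f(x^0) - f(x^*)$ (the paper's Bregman-divergence lemma, valid since $\alpha_i^0 = x^0$ and $\nabla f(x^*)=0$), giving $\theta_a^k\left(1 + 1/\gamma_a\right)\left[f(x^0)-f(x^*)\right] \leq \bar{\theta}_a^k\left[f(x^0)-f(x^*)\right]$ for $k \geq 1$. Your care in noting that the $(1+1/\gamma_a)$ factor appears only once (not to the $k$-th power) and is then absorbed into $\bar{\theta}_a^k$ is exactly the right bookkeeping, and the crude convexity bound over all of $[n]$ suffices (no smoothness is needed).
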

By using step size normalized by $\Delta^{1/2} \tau$ (similar to Theorem~\ref{thm:t2}) and parameters similar to the ones specified after Theorem~\ref{thm:t1} we can show speedups similar to the ones obtained in Theorem~\ref{thm:t2}. Please refer to the appendix for more details on the parameters in Theorem~\ref{thm:t3}.

Before ending our discussion on the theoretical analysis, we would like to highlight an important point. Our emphasis throughout the paper was on generality. While the results are presented here in full generality, one can obtain stronger results in specific cases. For example, in the case of $\saga$, one can obtain \emph{per iteration} convergence guarantees (see \cite{Defazio14}) rather than those corresponding to \emph{per epoch} presented in the paper. Also, $\saga$ can be analyzed without any additional synchronization per epoch. However, there is no qualitative difference in these guarantees accumulated over the epoch. Furthermore, in this case, our analysis for both synchronous and asynchronous cases can be easily modified to obtain convergence properties similar to those in \cite{Defazio14}.

\section{Experiments}
\label{sec:expt}
We present our empirical results in this section. For our experiments, we study the problem of binary classification via  $l_2$-regularized logistic regression. More formally, we are interested in the following optimization problem:
\begin{align}
\min_{x} \frac{1}{n}\sum_{i=1}^n \left(\log(1+\exp(y_i z_i^{\top} x)) + \lambda\|x\|^2\right),
\label{eq:logistic}
\end{align}
where $z_i \in \mathbb{R}^d$ and $y_i$ is the corresponding label for each $i \in [n]$. In all our experiments, we set $\lambda = 1/n$. Note that such a choice leads to high condition number. 

A careful implementation of \svrg \ is required for sparse gradients since the implementation as stated in Algorithm~\ref{alg:gsvrg} will lead to dense updates at each iteration. For an efficient implementation, a scheme like the `just-in-time' update scheme, as suggested in \cite{Schmidt13}, is required. Due to lack of space, we provide the implementation details in the appendix.

\begin{figure*}[!t]
\centering
   \begin{minipage}[b]{.24\textwidth}
   \includegraphics[width=\textwidth]{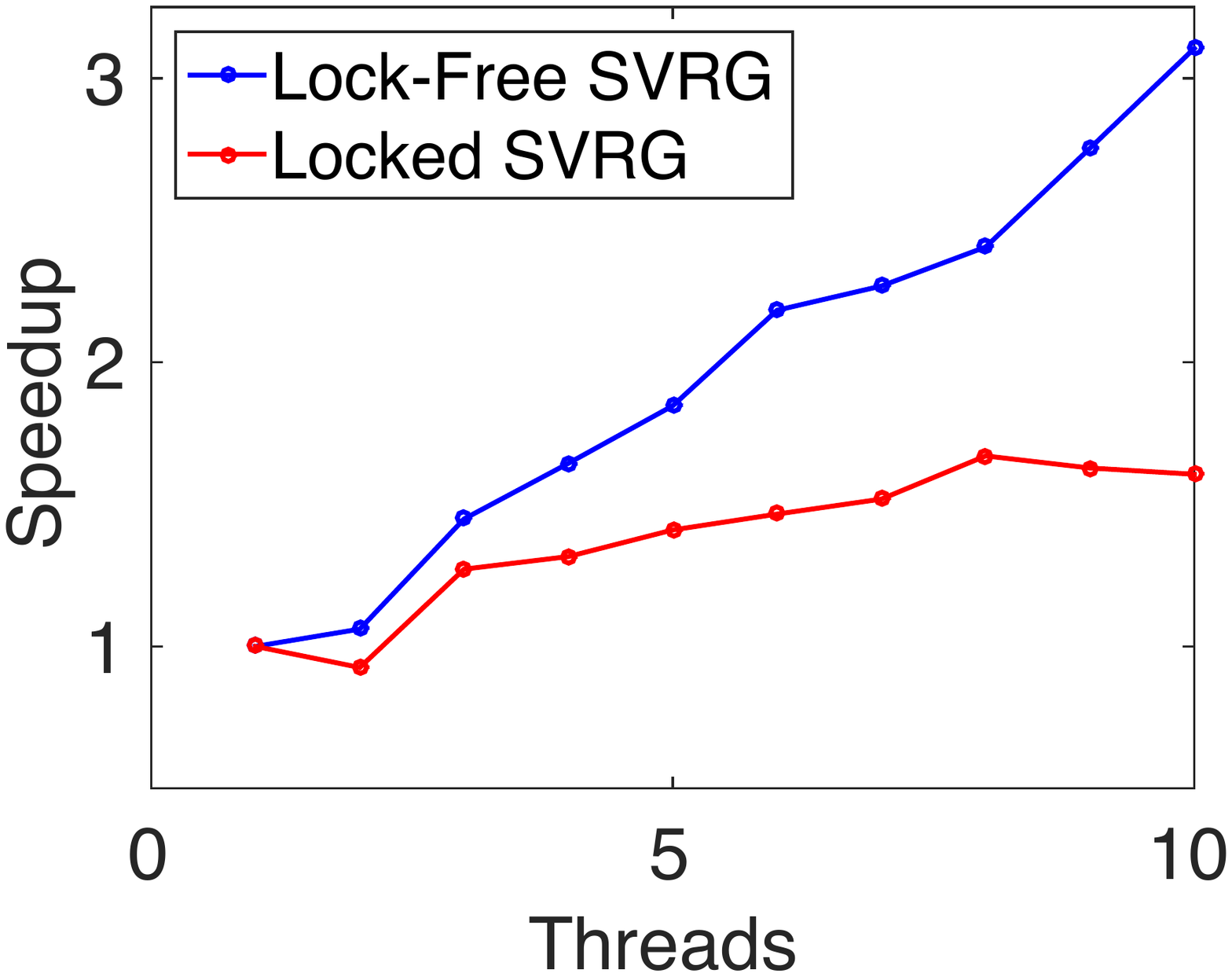}
   \end{minipage} %
   \begin{minipage}[b]{.24\textwidth}
   \includegraphics[width=\textwidth]{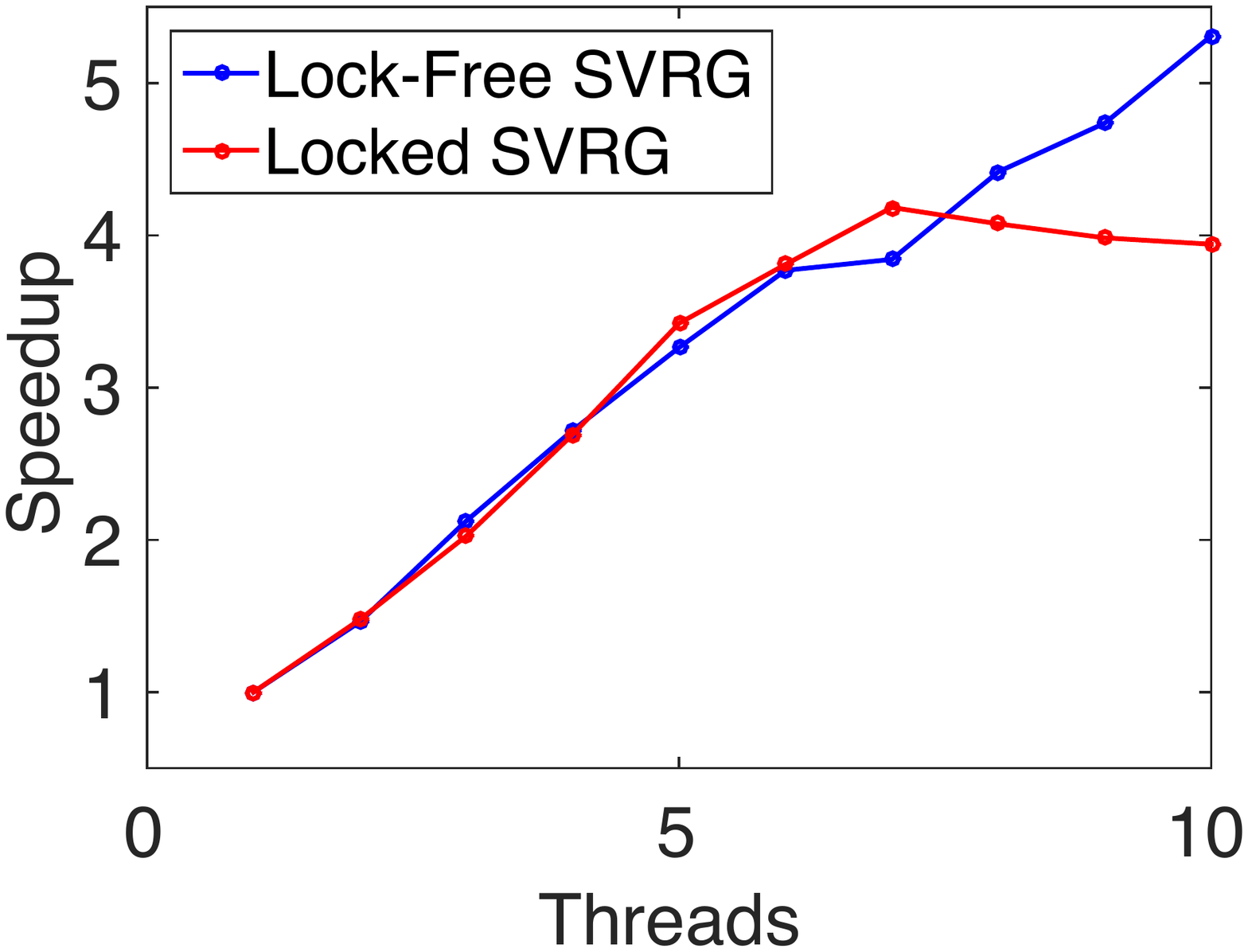}
   \end{minipage}
   \begin{minipage}[b]{.24\textwidth}
   \includegraphics[width=\textwidth]{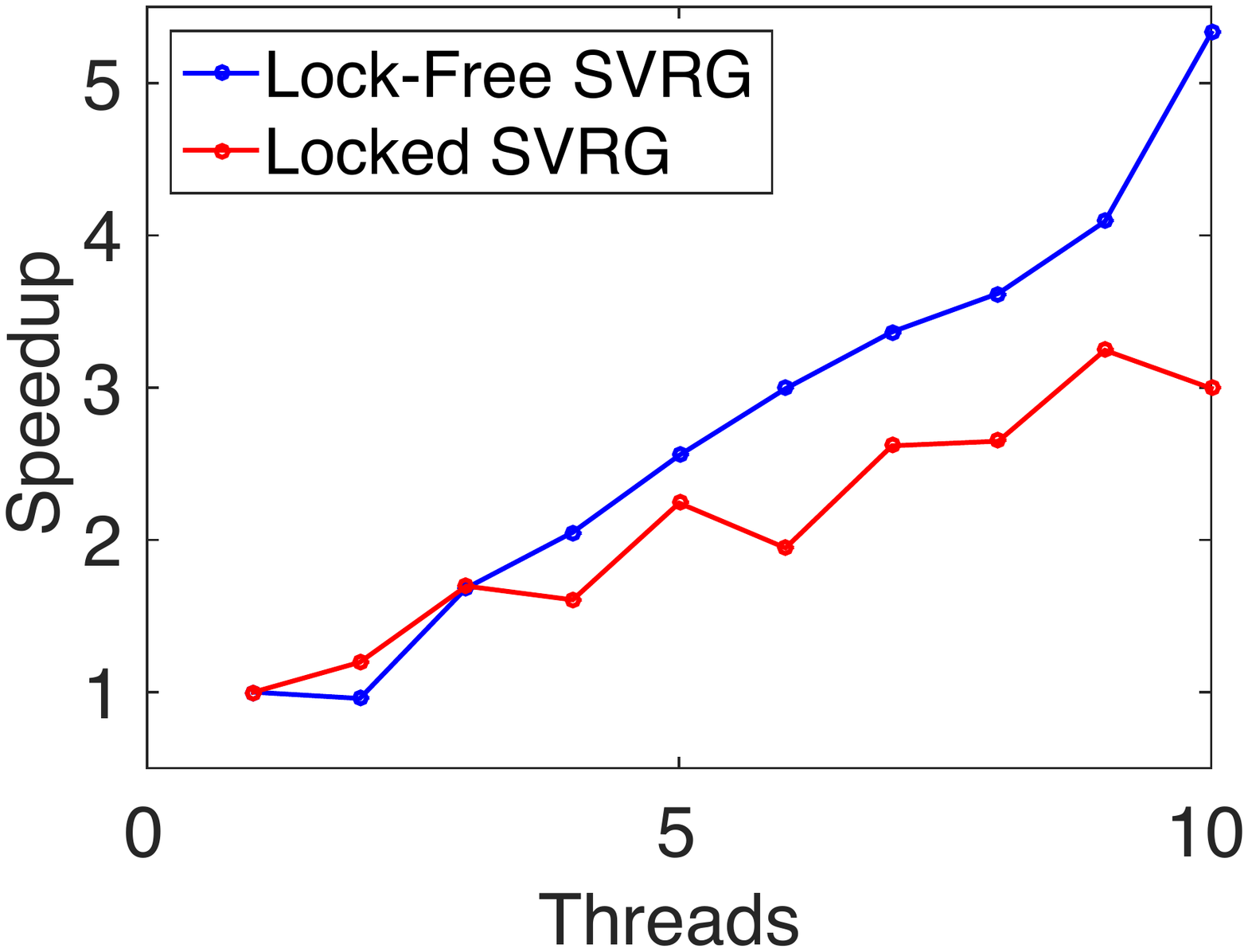}
   \end{minipage}
   \begin{minipage}[b]{.24\textwidth}
   \includegraphics[width=\textwidth]{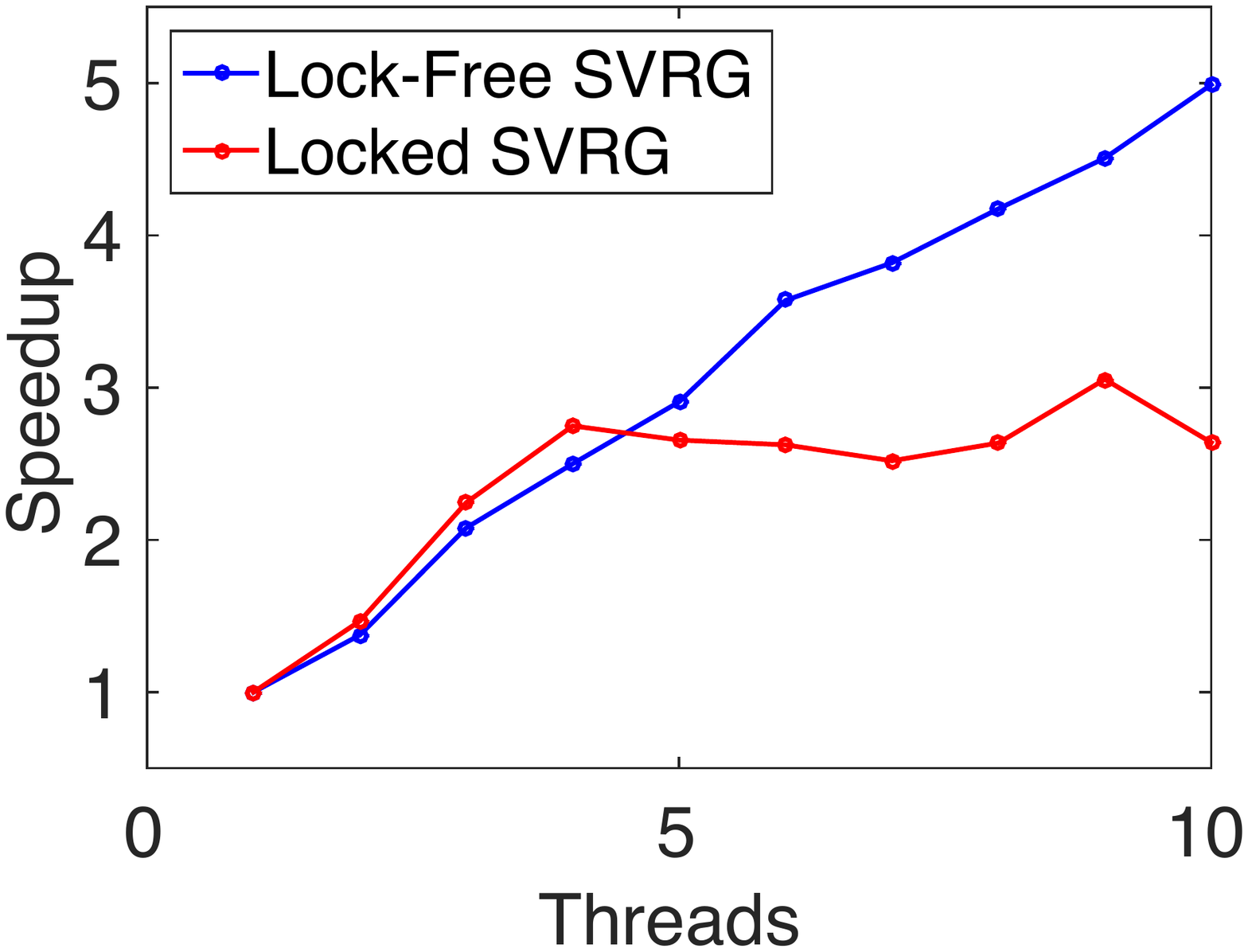}
   \end{minipage}
	\caption{$l_2$-regularized logistic regression. Speedup curves for Lock-Free $\svrg$ and Locked $\svrg$ on rcv1 (left), real-sim (left center), news20 (right center) and url (right) datasets. We report the speedup achieved by increasing the number of threads.}
	\label{fig:results1}
\end{figure*}

We evaluate the following algorithms for our experiments:
\begin{itemize}[leftmargin=+.2in]
\item {\bf Lock-Free $\svrg$}: This is the lock-free asynchronous variant of Algorithm~\ref{alg:gsvrg} using $\svrg$ schedule; all threads can read and update the parameters with any synchronization. Parameter updates are performed through atomic \emph{compare-and-swap} instruction \cite{Recht11}. A constant step size that gives the best convergence is chosen for the dataset.
\item {\bf Locked $\svrg$}: This is the locked version of the asynchronous variant of Algorithm~\ref{alg:gsvrg} using $\svrg$ schedule. In particular, we use a \emph{concurrent read exclusive write} locking model, where all threads can read the
  parameters but only one threads can update the parameters at a given time. The step size is chosen similar to Lock-Free $\svrg$.
\item {\bf Lock-Free $\sgd$}: This is the lock-free asynchronous variant of the $\sgd$ algorithm (see \cite{Recht11}). We compare two different versions of this algorithm: (i) $\sgd$ with constant step size (referred to as $\csgd$). (ii) $\sgd$ with decaying step size $\eta_0 \sqrt{\sigma_0/(t + \sigma_0)}$ (referred to as $\dsgd$), where constants $\eta_0$ and $\sigma_0$ specify the scale and speed of decay. For each of these versions, step size is tuned for each dataset to give the best convergence progress.
\end{itemize}
All the algorithms were implemented in C++ \footnote{All experiments were conducted on a Google Compute Engine n1-highcpu-32 machine with 32 processors and 28.8 GB RAM.}. We run our experiments on datasets from LIBSVM website\footnote{\url{http://www.csie.ntu.edu.tw/~cjlin/libsvmtools/datasets/binary.html}}. Similar to \cite{Xiao14}, we normalize each example in the dataset so that $\|z_i\|_2 = 1$ for all $i \in [n]$. Such a normalization leads to an upper bound of $0.25$ on the Lipschitz constant of the gradient of $f_i$. The epoch size $m$ is chosen as $2n$ (as recommended in \cite{Johnson13}) in all our experiments.
In the first experiment, we compare the speedup achieved by our asynchronous algorithm. To this end, for each dataset we first measure the time required for the algorithm to each an accuracy of $10^{-10}$ (i.e., $f(x) - f(x^*) < 10^{-10}$). The speedup with $P$ threads is defined as the ratio of the runtime with a single thread to the runtime with $P$ threads. Results in Figure~\ref{fig:results1} show the speedup on various datasets. As seen in the figure, we achieve significant speedups for all the datasets. Not surprisingly, the speedup achieved by Lock-free $\svrg$ is much higher than ones obtained by locking. Furthermore, the lowest speedup is achieved for rcv1 dataset. Similar speedup behavior was reported for this dataset in \cite{Recht11}. It should be noted that this dataset is not sparse and hence, is a bad case for the algorithm (similar to \cite{Recht11}).

For the second set of experiments we compare the performance of Lock-Free $\svrg$ with stochastic gradient descent. In particular, we compare with the variants of stochastic gradient descent, $\dsgd$ and $\csgd$, described earlier in this section. It is well established that the performance of variance reduced stochastic methods is better than that of $\sgd$. We would like to empirically verify that such benefits carry over to the asynchronous variants of these algorithms. Figure~\ref{fig:results2} shows the performance of Lock-Free $\svrg$, $\dsgd$ and $\csgd$. Since the computation complexity of each epoch of these algorithms is different, we directly plot the objective value versus the runtime for each of these algorithms. We use 10 cores for comparing the algorithms in this experiment. As seen in the figure, Lock-Free $\svrg$ outperforms both $\dsgd$ and $\csgd$. The performance gains are qualitatively similar to those reported in \cite{Johnson13} for the synchronous versions of these algorithms. It can also be seen that the $\dsgd$, not surprisingly, outperforms $\csgd$ in all the cases. In our experiments, we observed that Lock-Free $\svrg$, in comparison to $\sgd$, is relatively much less sensitive to the step size and more robust to increasing threads.

\begin{figure*}[!t]
\centering
   \begin{minipage}[b]{.24\textwidth}
   \includegraphics[width=\textwidth]{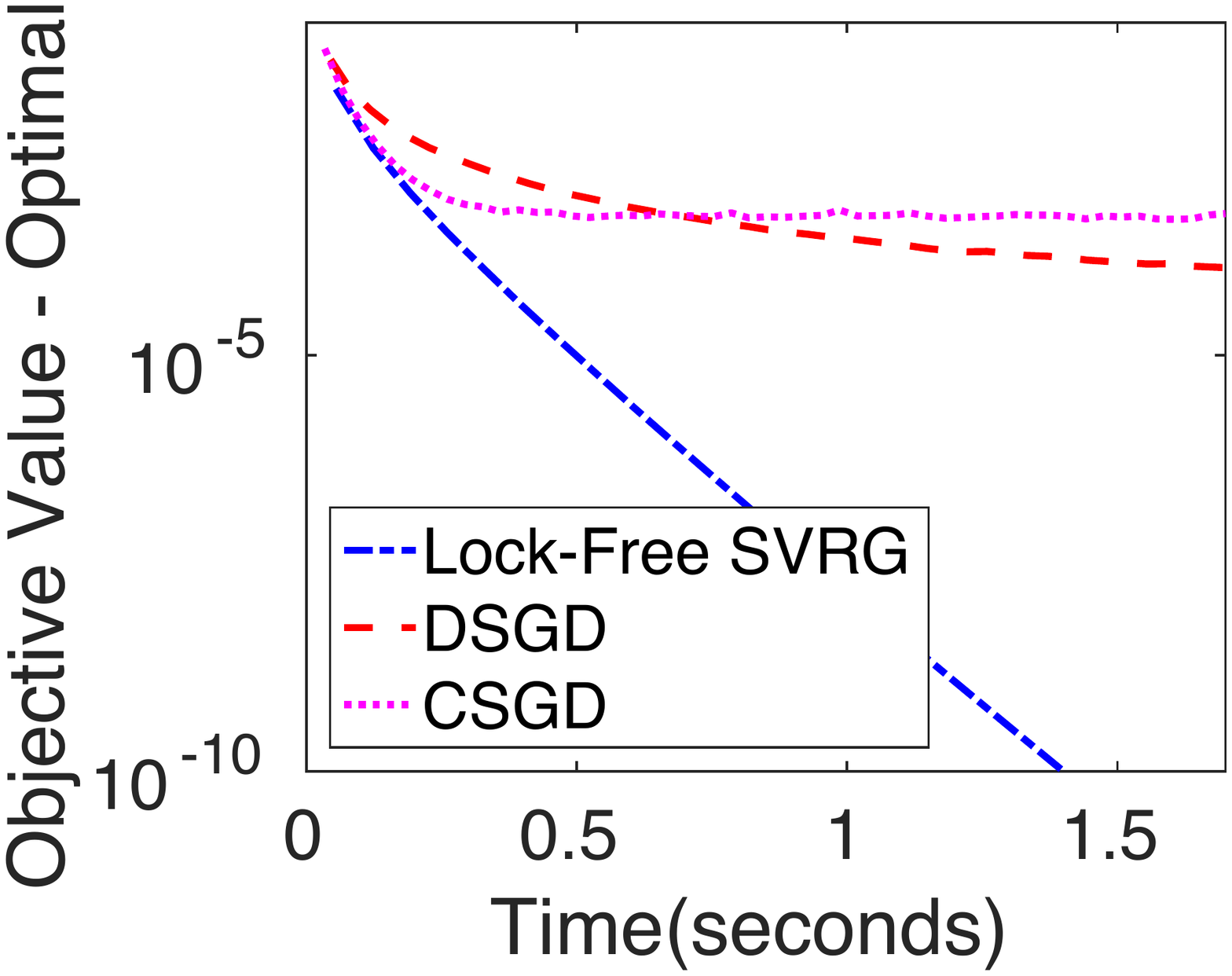}
   \end{minipage} %
   \begin{minipage}[b]{.24\textwidth}
   \includegraphics[width=\textwidth]{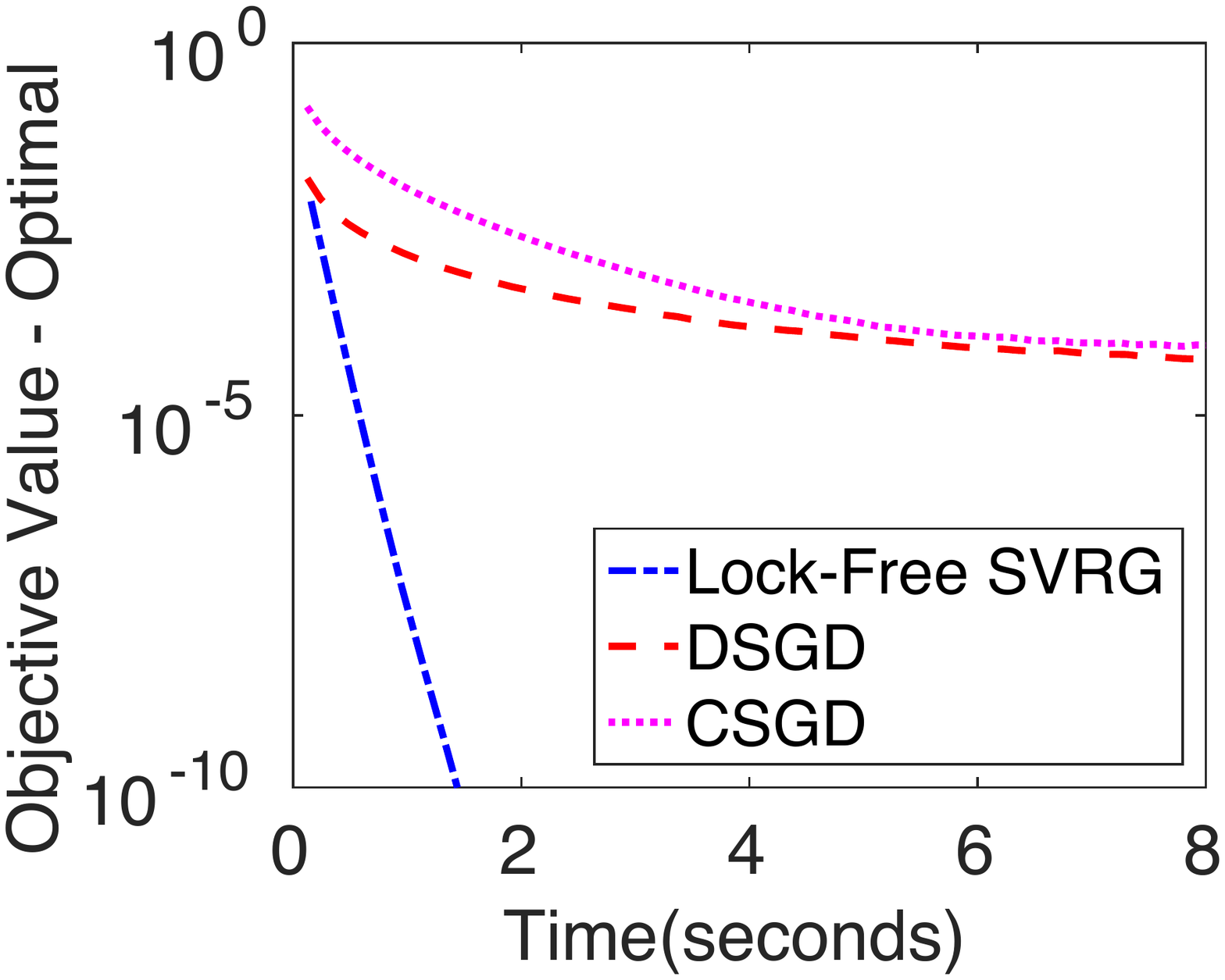}
   \end{minipage}
   \begin{minipage}[b]{.24\textwidth}
   \includegraphics[width=\textwidth]{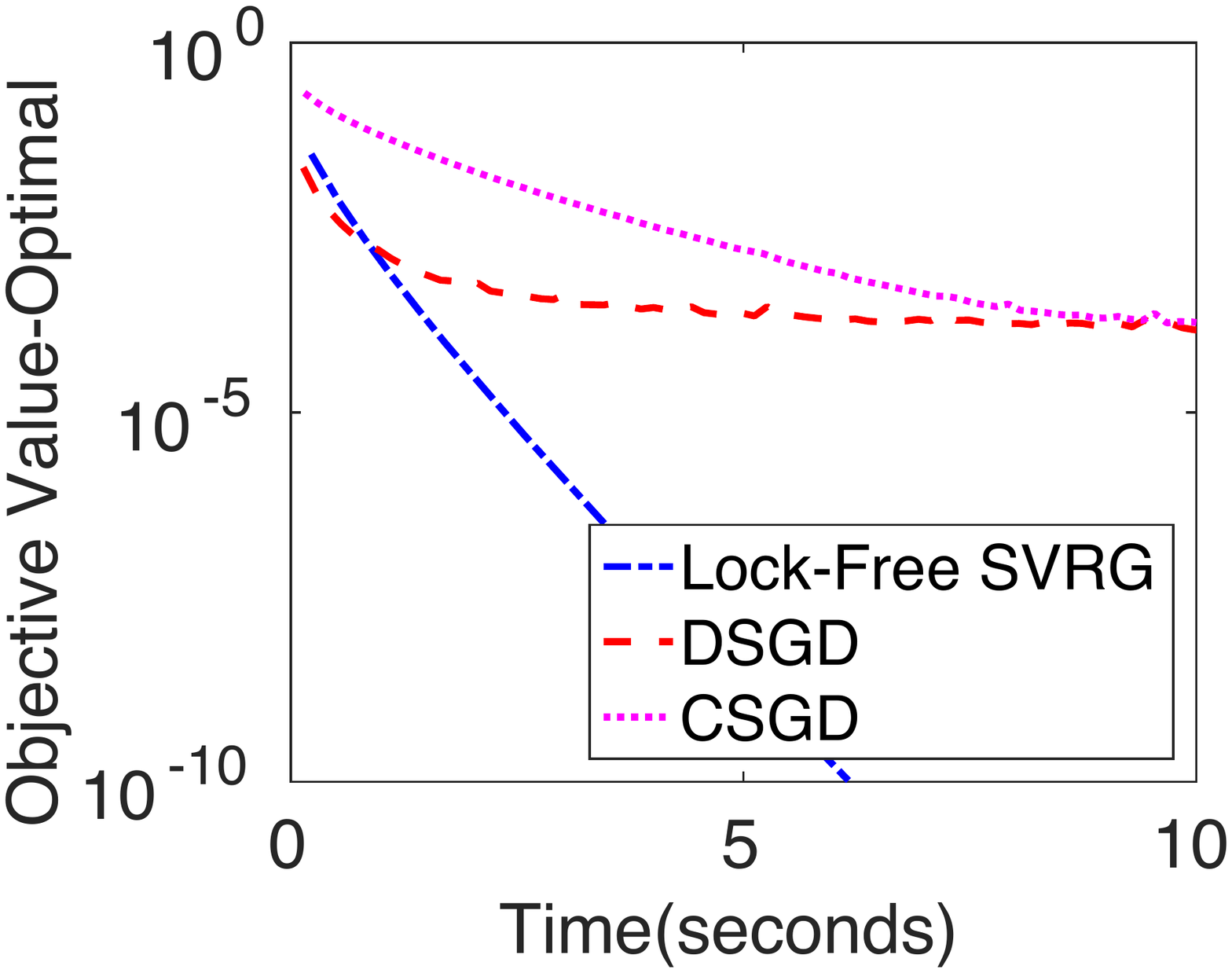}
   \end{minipage}
   \begin{minipage}[b]{.24\textwidth}
   \includegraphics[width=\textwidth]{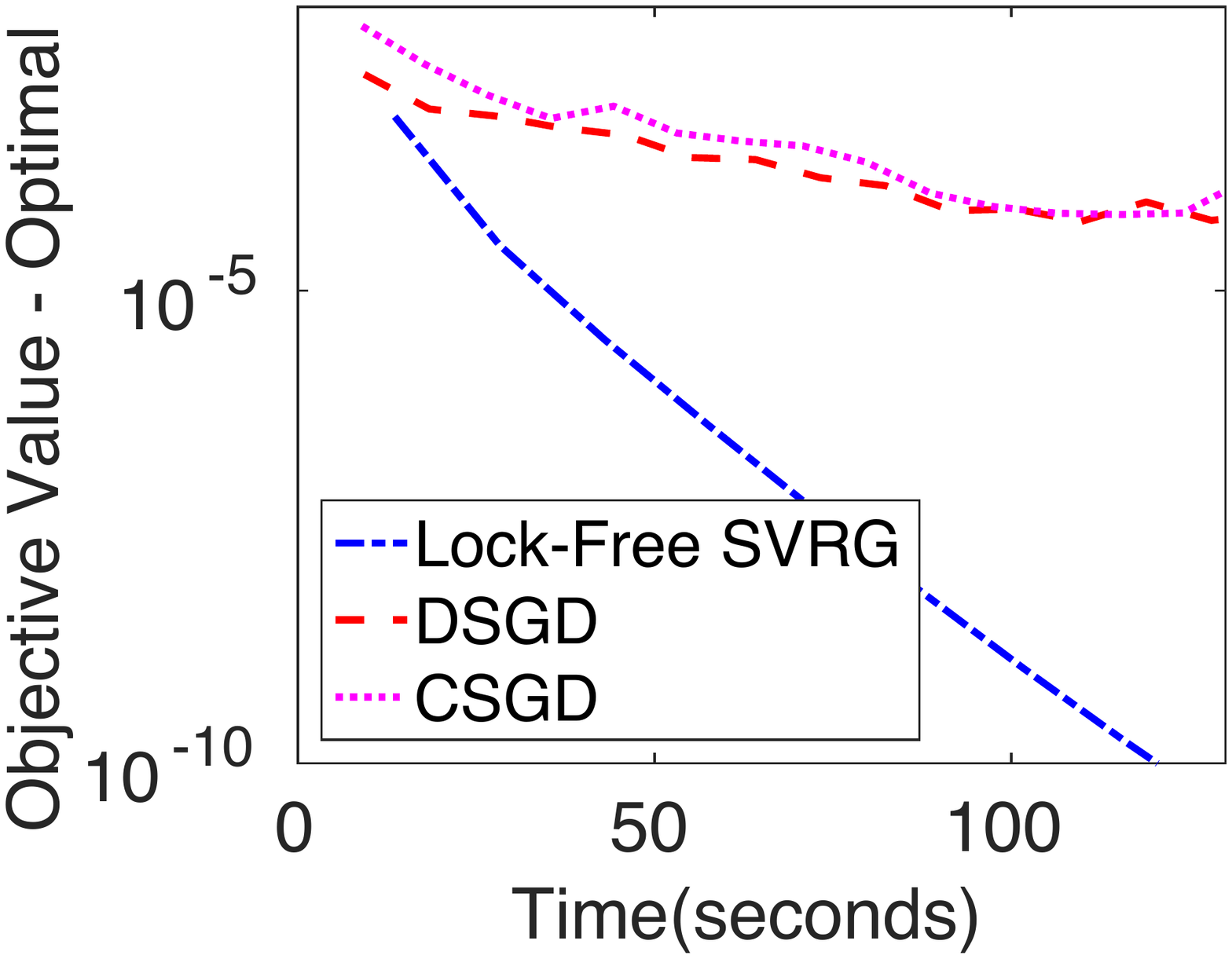}
   \end{minipage}
   \caption{$l_2$-regularized logistic regression. Training loss residual $f(x) - f(x^*)$ versus time plot of Lock-Free $\svrg$, $\dsgd$ and $\csgd$ on rcv1 (left), real-sim (left center), news20 (right center) and url (right) datasets. The experiments are parallelized over 10 cores.}
   \label{fig:results2}
\end{figure*}

\section{Discussion \& Future Work}

In this paper, we presented a unifying framework based on \cite{Defazio14}, that captures many popular variance reduction techniques for stochastic gradient descent. We use this framework to develop a simple hybrid variance reduction method. The primary purpose of the framework, however, was to provide a common platform to analyze various variance reduction techniques. To this end, we provided convergence analysis for the framework under certain conditions. More importantly, we propose an asynchronous algorithm for the framework with provable convergence guarantees. The key consequence of our approach is that we obtain asynchronous variants of several algorithms like $\svrg$, $\saga$ and $\stgd$. Our asynchronous algorithms exploits sparsity in the data to obtain near linear speedup in settings that are typically encountered in machine learning.

For future work, it would be interesting to perform an empirical comparison of various schedules. In particular, it would be worth exploring the space-time-accuracy tradeoffs of these schedules. We would also like to analyze the effect of these tradeoffs on the asynchronous variants.

{\small
\bibliographystyle{abbrv}
\setlength{\bibsep}{0pt}
\bibliography{bibfile}

\begin{thebibliography}{10}

\bibitem{agarwal2014}
A.~Agarwal and L.~Bottou.
\newblock A lower bound for the optimization of finite sums.
\newblock {\em arXiv:1410.0723}, 2014.

\bibitem{agDuc11}
A.~Agarwal and J.~C. Duchi.
\newblock Distributed delayed stochastic optimization.
\newblock In {\em Advances in Neural Information Processing Systems}, pages
  873--881, 2011.

\bibitem{BerTsi89}
D.~Bertsekas and J.~Tsitsiklis.
\newblock {\em Parallel and Distributed Computation: {N}umerical Methods}.
\newblock Prentice-Hall, 1989.

\bibitem{bertsekas2011}
D.~P. Bertsekas.
\newblock Incremental gradient, subgradient, and proximal methods for convex
  optimization: A survey.
\newblock {\em Optimization for Machine Learning}, 2010:1--38, 2011.

\bibitem{defaziothesis}
A.~Defazio.
\newblock {\em {New Optimization Methods for Machine Learning}}.
\newblock PhD thesis, Australian National University, 2014.

\bibitem{Defazio14}
A.~Defazio, F.~Bach, and S.~Lacoste-Julien.
\newblock {SAGA}: A fast incremental gradient method with support for
  non-strongly convex composite objectives.
\newblock In {\em NIPS 27}, pages 1646--1654. 2014.

\bibitem{defazio2014finito}
A.~J. Defazio, T.~S. Caetano, and J.~Domke.
\newblock Finito: A faster, permutable incremental gradient method for big data
  problems.
\newblock {\em arXiv:1407.2710}, 2014.

\bibitem{dekel2012}
O.~Dekel, R.~Gilad-Bachrach, O.~Shamir, and L.~Xiao.
\newblock Optimal distributed online prediction using mini-batches.
\newblock {\em The Journal of Machine Learning Research}, 13(1):165--202, 2012.

\bibitem{mert15}
M.~Gürbüzbalaban, A.~Ozdaglar, and P.~Parrilo.
\newblock A globally convergent incremental {N}ewton method.
\newblock {\em Mathematical Programming}, 151(1):283--313, 2015.

\bibitem{Johnson13}
R.~Johnson and T.~Zhang.
\newblock Accelerating stochastic gradient descent using predictive variance
  reduction.
\newblock In {\em NIPS 26}, pages 315--323. 2013.

\bibitem{Konecny15}
J.~Kone{\v c}n{\'y}, J.~Liu, P.~Richt{\'a}rik, and M.~Tak{\'a}{\v c}.
\newblock {M}ini-{B}atch {S}emi-{S}tochastic {G}radient {D}escent in the
  {P}roximal {S}etting.
\newblock {\em arXiv:1504.04407}, 2015.

\bibitem{Konecny2013}
J.~Kone{\v c}n{\'y} and P.~Richt{\'a}rik.
\newblock {S}emi-{S}tochastic {G}radient {D}escent {M}ethods.
\newblock {\em arXiv:1312.1666}, 2013.

\bibitem{li2014}
M.~Li, D.~G. Andersen, A.~J. Smola, and K.~Yu.
\newblock {Communication Efficient Distributed Machine Learning with the
  Parameter Server}.
\newblock In {\em NIPS 27}, pages 19--27, 2014.

\bibitem{Liu14}
J.~Liu, S.~Wright, C.~R{\'{e}}, V.~Bittorf, and S.~Sridhar.
\newblock An asynchronous parallel stochastic coordinate descent algorithm.
\newblock In {\em {ICML} 2014}, pages 469--477, 2014.

\bibitem{liu15}
J.~Liu and S.~J. Wright.
\newblock Asynchronous stochastic coordinate descent: Parallelism and
  convergence properties.
\newblock {\em SIAM Journal on Optimization}, 25(1):351--376, 2015.

\bibitem{mairal2013}
J.~Mairal.
\newblock Optimization with first-order surrogate functions.
\newblock {\em arXiv:1305.3120}, 2013.

\bibitem{nedic2001distributed}
A.~Nedi{\'c}, D.~P. Bertsekas, and V.~S. Borkar.
\newblock Distributed asynchronous incremental subgradient methods.
\newblock {\em Studies in Computational Mathematics}, 8:381--407, 2001.

\bibitem{nemirov09}
A.~Nemirovski, A.~Juditsky, G.~Lan, and A.~Shapiro.
\newblock Robust stochastic approximation approach to stochastic programming.
\newblock {\em SIAM Journal on Optimization}, 19(4):1574--1609, 2009.

\bibitem{nesterov2012}
Y.~Nesterov.
\newblock Efficiency of coordinate descent methods on huge-scale optimization
  problems.
\newblock {\em SIAM Journal on Optimization}, 22(2):341--362, 2012.

\bibitem{nitanda14}
A.~Nitanda.
\newblock {Stochastic Proximal Gradient Descent with Acceleration Techniques}.
\newblock In {\em NIPS 27}, pages 1574--1582, 2014.

\bibitem{Recht11}
B.~Recht, C.~Re, S.~Wright, and F.~Niu.
\newblock {Hogwild!: A Lock-Free Approach to Parallelizing Stochastic Gradient
  Descent}.
\newblock In {\em NIPS 24}, pages 693--701, 2011.

\bibitem{reddi2014large}
S.~Reddi, A.~Hefny, C.~Downey, A.~Dubey, and S.~Sra.
\newblock Large-scale randomized-coordinate descent methods with non-separable
  linear constraints.
\newblock In {\em UAI 31}, 2015.

\bibitem{richtarik2014}
P.~Richt{\'a}rik and M.~Tak{\'a}{\v{c}}.
\newblock Iteration complexity of randomized block-coordinate descent methods
  for minimizing a composite function.
\newblock {\em Mathematical Programming}, 144(1-2):1--38, 2014.

\bibitem{RobMon51}
H.~Robbins and S.~Monro.
\newblock A stochastic approximation method.
\newblock {\em Annals of Mathematical Statistics}, 22:400--407, 1951.

\bibitem{Schmidt13}
M.~W. Schmidt, N.~L. Roux, and F.~R. Bach.
\newblock {Minimizing Finite Sums with the Stochastic Average Gradient}.
\newblock {\em arXiv:1309.2388}, 2013.

\bibitem{asdca}
S.~Shalev-Shwartz and T.~Zhang.
\newblock Accelerated mini-batch stochastic dual coordinate ascent.
\newblock In {\em NIPS 26}, pages 378--385, 2013.

\bibitem{sdca}
S.~Shalev-Shwartz and T.~Zhang.
\newblock Stochastic dual coordinate ascent methods for regularized loss.
\newblock {\em The Journal of Machine Learning Research}, 14(1):567--599, 2013.

\bibitem{shamir2014}
O.~Shamir and N.~Srebro.
\newblock On distributed stochastic optimization and learning.
\newblock In {\em Proceedings of the 52nd Annual Allerton Conference on
  Communication, Control, and Computing}, 2014.

\bibitem{Xiao14}
L.~Xiao and T.~Zhang.
\newblock A proximal stochastic gradient method with progressive variance
  reduction.
\newblock {\em {SIAM} Journal on Optimization}, 24(4):2057--2075, 2014.

\bibitem{zinkevich10}
M.~Zinkevich, M.~Weimer, L.~Li, and A.~J. Smola.
\newblock Parallelized stochastic gradient descent.
\newblock In {\em NIPS}, pages 2595--2603, 2010.

\end{thebibliography}
}
\clearpage

\appendix

\section{Appendix}

{\bf Notation}: We use $D_f$ to denote the Bregman divergence (defined below) for function $f$.
$$
D_f(x,y) = f(x) - f(y) - \langle \nabla f(y), x - y \rangle.
$$ 
For ease of exposition, we use $\mathbb{E}[X]$ to denote the expectation the random variable $X$ with respect to indices $\{i_1, \dots, i_t\}$ when $X$ depends on just these indices up to step $t$. This dependence will be clear from the context. We use $\mathbbm{1}$ to denote the indicator function. We assume $\sum_{d = i}^j a_d = 0$ if $i > j$.

We would like to clarify the definition of $x^{km}$ here. As noted in the main text, we assume that $x^{km+m}$ is replaced with an element chosen randomly from $\{x^{km},\dots,x^{km+m-1}\}$ with probability $\{p_1, \cdots, p_m\}$ at the end of the $(k+1)^{\text{th}}$ epoch. However, whenever $x^{km}$ appears in the analysis (proofs), it represents the iterate before this replacement. 

\section*{Implementation Details}

Since we are interested in sparse datasets, simply taking $f_i(x) = \log(1+\exp(y_i z_i^{\top} x)) + \lambda\|x\|^2$ is not efficient as it requires updating the whole vector $x$ at each iteration. This is due to the  regularization term in each of the $f_i$'s. Instead, similar to \cite{Recht11}, we rewrite problem in~\eqref{eq:logistic} as follows:
\begin{align*}
\min_{x} \frac{1}{n}\sum_{i=1}^n \left(\log(1+\exp(y_i z_i^{\top} x)) + \lambda \sum_{j \in nz(z_i)} \frac{\|x_j\|^2}{d_j}\right),
\end{align*}
where $nz(z)$ represents the non-zero components of vector $z$, and $d_j = \sum_i \mathbbm{1}(j \in nz(z_i))$. While this leads to sparse gradients at each iteration, updates in $\svrg$ are still dense due to the part of the update that contains $\sum_i \nabla f_i(\alpha_i)/n$. This problem can be circumvented by using the following update scheme. First, recall that for $\svrg$, $\sum_i \nabla f_i(\alpha_i)/n$ does not change during an epoch (see Figure~\ref{fig:all-sched}). Therefore, during the $(k+1)^{\text{st}}$ epoch we have the following relationship:
\begin{align*}
x^t = \left[\tilde{x}^k - \eta \sum_{j=km}^{t-1} (\nabla f_{i_{j}}(x^{j}) - \nabla f_{i_{j}}(\tilde{x}^{k}))\right] - \left[\frac{\eta(t - km)}{n} \sum_{i=1}^n \nabla f_i(\tilde{x}^k)\right].
\end{align*}
We maintain each bracketed term separately. The updates to the first term in the above equation are sparse while those to the second term are just a simple scalar addition, since we already maintain the average gradient $\sum_{i=1}^n \nabla f_i(\tilde{x}^k)/n$. When the gradient of $f_{i_t}$ at $x^t$ is needed, we \emph{only} calculate components of $x^t$ required for $f_{i_t}$ on the fly by aggregating these two terms. Hence, each step of this update procedure can be implemented in a way that respects sparsity of the data.

\section*{Proof of Theorem~\ref{thm:t1}}
\begin{proof}
We expand function $f$ as $f(x) = g(x) + h(x)$ where $g(x) = \frac{1}{n} \sum_{i \in S} f_i(x)$ and $h(x) = \frac{1}{n} \sum_{i \notin S} f_i(x)$. Let the present epoch be $k+1$. We define the following:
\begin{align*}
v^t &= \frac{1}{\eta} (x^{t+1} - x^t) = -\left[ \nabla f_{i_t}(x^t) - \nabla f_{i_t}(\alpha_{i_t}^t) + \frac{1}{n} \sum_i \nabla f_i(\alpha_i^t) \right] \\
G_t &= \frac{1}{n} \sum_{i \in S} \left(f_{i}(\alpha_i^{t}) - f_i(x^*) - \langle \nabla f_i(x^*), \alpha_i^{t} - x^* \rangle \right) \\
R_t &= \mathbb{E}\left[ c\|x^{t} - x^*\|^2 + G_t \right].
\end{align*}
We first observe that $\mathbb{E}[v^t] = -\nabla f(x^t)$. This follows from the unbiasedness of the gradient at each iteration. Using this observation, we have the following:
\begin{align}
\mathbb{E}[R_{t+1}] & = \mathbb{E}[c\|x^{t+1} - x^*\|^2 + G_{t+1}] = \mathbb{E}[c\|x^t + \eta v^t - x^*\|^2 + G_{t+1}] \nonumber \\
&= c \mathbb{E}\left[ \|x^t - x^*\|^2\right] + c\eta^2 \mathbb{E}\left[\|v^t\|^2\right] + 2c\eta \mathbb{E}\left[\langle x^t - x^*, v^t \rangle \right] + \mathbb{E}[G_{t+1}]  \nonumber \\
&\leq  c \mathbb{E}\left[ \|x^t - x^*\|^2\right] + c\eta^2 \mathbb{E}\left[\|v^t\|^2\right] - 2c\eta \mathbb{E}\left[f(x^t) - f(x^*)\right] + \mathbb{E}[G_{t+1}].
\label{eq:thm1-eq1}
\end{align}
The last step follows from convexity of $f$ and the unbiasedness of $v^t$. We have the following relationship between $G_{t+1}$ and $G_t$.
\begin{align}
\mathbb{E}[G_{t+1}] &= \left(1 - \frac{1}{n}\right) \mathbb{E} \left[G_t \right] + \frac{1}{n} \mathbb{E} \left[\frac{1}{n} \sum_{i \in S} \left(f_{i}(x^{t}) - f_i(x^*) - \langle \nabla f_i(x^*), x^{t} - x^* \rangle \right) \right] \nonumber \\
&= \left(1 - \frac{1}{n}\right) \mathbb{E} \left[G_t \right] + \frac{1}{n}\mathbb{E}[D_g(x^t, x^*)]. \label{eq:thm1-gt-rec}
\end{align}
This follows from the definition of the schedule of $\svag$ for indices  in $S$. Substituting the above relationship in Equation~\eqref{eq:thm1-eq1} we get the following.
\begin{align*}
R_{t+1} & \leq R_{t} + c\eta^2 \mathbb{E}\left[\|v^t\|^2\right] - 2c\eta \mathbb{E}\left[f(x^t) - f(x^*)\right] - \frac{1}{n} \mathbb{E}[G_t] + \frac{1}{n}\mathbb{E}[D_g(x^t, x^*)] \\
&\leq \left(1 - \frac{1}{\kappa}\right) R_t + \frac{c}{\kappa} \mathbb{E}[\|x^t - x^*\|^2] + c\eta^2 \mathbb{E}\left[\|v^t\|^2\right] - 2c\eta \mathbb{E}\left[f(x^t) - f(x^*)\right] \\
& \qquad \qquad + \left(\frac{1}{\kappa} - \frac{1}{n}\right) \mathbb{E}[G_t] + \frac{1}{n}\mathbb{E}[D_g(x^t, x^*)] \\
&:=  \left(1 - \frac{1}{\kappa}\right) R_t + b_t.
\end{align*}
We describe the bounds for $b_t$ (defined below).
\begin{align*}
b_t &= \frac{c}{\kappa} \underbrace{\mathbb{E}[\|x^t - x^*\|^2]}_{T_1} + c\eta^2\underbrace{ \mathbb{E}\left[\|v^t\|^2\right]}_{T_2} - 2c\eta \mathbb{E}\left[f(x^t) - f(x^*)\right] \\
& \qquad \qquad + \left(\frac{1}{\kappa} - \frac{1}{n}\right) \mathbb{E}[G_t] + \frac{1}{n}\mathbb{E}[D_g(x^t, x^*)].
\end{align*}
The terms $T_1$ and $T2$ can be bounded in the following fashion:
\begin{align*}
T_1 &= \mathbb{E}[\|x^t - x^*\|^2] \leq \frac{2}{\lambda}\mathbb{E}[f(x^t) - f(x^*)] \\
T_2 &= \mathbb{E}\left[\|v^t\|^2\right] \leq \left(1 + \frac{1}{\beta}\right) \mathbb{E}\left[\|\nabla f_{i_t}(\alpha_{i_t}^t) - \nabla f_{i_t}(x^*)\|^2 \right] + (1 + \beta) \mathbb{E}\left[\|\nabla f_{i_t}(x^t) - \nabla f_{i_t}(x^*)\|^2 \right] \\
& \qquad \qquad \ \ \ \ \ \ \leq \frac{2L}{n}\left(1 + \frac{1}{\beta}\right) \mathbb{E}\sum_i \left[f_i(\alpha_{i}^t) - f(x^*) - \left\langle \nabla f_i(x^*), \alpha_{i}^t - x^* \right\rangle \right] \\
& \qquad \qquad \qquad \ \ \ \ \ \ + \frac{2L}{n}(1 + \beta) \mathbb{E}\sum_i \left[f_i(x^t) - f(x^*)\right] \\
&\qquad \qquad \ \ \ \ \ \ \leq 2L\left(1 + \frac{1}{\beta}\right)\mathbb{E}\left[G_{t} + D_h(\tilde{x}^k,x^*)\right] + 2L(1+\beta) \mathbb{E}[f(x^t) - f(x^*)].
\end{align*}
The bound on $T_1$ is due to strong convexity nature of function $f$. The first inequality and second inequalities on $T_2$ directly follows from Lemma 3 of \cite{Defazio14} and simple application of Lemma~\ref{lem:var-lemma} respectively. The third inequality follows from the definition of $G_t$ and the fact that $\alpha_i^t = \tilde{x}^k$ for all $i \notin S$ and $t \in \{km, \dots, km+m-1\}$.

Substituting these bounds $T_1$ and $T_2$ in $b_t$, we get
\begin{align}
b_t &\leq -\left[2c\eta - 2cL\eta^2(1+\beta) - \frac{2c}{\kappa\lambda}\right] \mathbb{E}\left[f(x^t) - f(x^*) \right] \nonumber \\
& \qquad  + \left(\frac{1}{\kappa} + 2cL\eta^2\left(1 + \frac{1}{\beta}\right) - \frac{1}{n}\right) \mathbb{E}[G_t] + \frac{1}{n}\mathbb{E}[D_g(x^t, x^*)] \nonumber \\
& \qquad + 2cL\eta^2\left(1 + \frac{1}{\beta}\right)\mathbb{E}\left[ D_h(\tilde{x}^k,x^*)\right] \nonumber \\
&\leq -\left[2c\eta - 2cL\eta^2(1+\beta) - \frac{1}{n} -  \frac{2c}{\kappa\lambda}\right] \mathbb{E}\left[f(x^t) - f(x^*) \right] \nonumber \\
& \qquad  + \left(\frac{1}{\kappa} + 2cL\eta^2\left(1 + \frac{1}{\beta}\right) - \frac{1}{n}\right) \mathbb{E}[G_t] + 2cL\eta^2\left(1 + \frac{1}{\beta}\right)\mathbb{E}\left[ D_h(\tilde{x}^k,x^*)\right] \nonumber \\
&\leq -\left[2c\eta - 2cL\eta^2(1+\beta) - \frac{1}{n} -  \frac{2c}{\kappa\lambda}\right] \mathbb{E}\left[f(x^t) - f(x^*) \right] + 2cL\eta^2\left(1 + \frac{1}{\beta}\right)\mathbb{E}\left[ D_h(\tilde{x}^k,x^*)\right]. \label{eq:thm1-bt-bound}
\end{align}
The second inequality follows from Lemma~\ref{lem:breg}. In particular, we use the fact that $f(x) - f(x^*) = D_f(x,x^*)$ and $D_f(x,x^*) = D_g(x,x^*) + D_h(x,x^*) \geq D_g(x,x^*)$. The third inequality follows from the following for the choice of our parameters:
$$
\frac{1}{\kappa} + 2Lc\eta^2 \left(1 + \frac{1}{\beta}\right) \leq \frac{1}{n}.
$$
Applying the recursive relationship on $R_{t+1}$ for m iterations, we get
\begin{align*}
R_{km+m} \leq \left(1 - \frac{1}{\kappa}\right)^m \tilde{R}_{k} + \sum_{j=0}^{m-1}  \left(1 - \frac{1}{\kappa}\right)^{m-1 - j} b_{km+j}
\end{align*}
where 
$$
\tilde{R}_{k} = \mathbb{E}\left[ c\|\tilde{x}^{k} - x^*\|^2 + \tilde{G}_k \right].
$$
Substituting the bound on $b_t$ from Equation~\eqref{eq:thm1-bt-bound} in the above equation we get the following inequality:
\begin{align*}
R_{km+m} &\leq \left(1 - \frac{1}{\kappa}\right)^m \tilde{R}_{k} \\
& \ - \sum_{j=0}^{m-1} \left(2c\eta (1 - L\eta(1+\beta)) - \frac{1}{n} - \frac{2c}{\kappa\lambda} \right) \left(1 - \frac{1}{\kappa}\right)^{m-1 - j} \mathbb{E}\left[f(x^{km+j}) - f(x^*)\right] \\
& \ +  \sum_{j=0}^{m-1} \left(1 - \frac{1}{\kappa}\right)^{m-1 - j} 2Lc\eta^2 \left(1 + \frac{1}{\beta}\right)\mathbb{E}\left[h(\tilde{x}^{k}) - h(x^*) - \langle \nabla h(x^*), \tilde{x}^k - x^*\rangle \right].
\end{align*}
We now use the fact that $\tilde{x}^{k+1}$ is chosen randomly from $\{x^{km}, \dots, x^{km+m-1}\}$ with probabilities proportional to $\{(1-1/\kappa)^{m-1}, \dots, 1\}$ we have the following consequence of the above inequality.
\begin{align*}
R_{km+m} &+ \kappa \left[1 - \left(1 - \frac{1}{\kappa}\right)^m \right] \left(2c\eta (1 - L\eta(1+\beta)) - \frac{1}{n} - \frac{2c}{\kappa\lambda} \right) \mathbb{E}\left[f(\tilde{x}^{k+1}) - f(x^*)\right] \\
& \leq \frac{2c}{\lambda}\left(1 - \frac{1}{\kappa}\right)^m \mathbb{E}\left[f(\tilde{x}^k) - f(x^*)\right] + \left(1 - \frac{1}{\kappa}\right)^m \mathbb{E}\left[ \tilde{G}_k \right] \\
& \ \ + 2Lc\eta^2  \kappa \left[1 - \left(1 - \frac{1}{\kappa}\right)^m \right] \left(1 + \frac{1}{\beta}\right)\mathbb{E}\left[D_h(\tilde{x}^k,x^*)\right].
\end{align*}
For obtaining the above inequality, we used the strongly convex nature of function $f$. Again, using the Bregman divergence based inequality (see Lemma~\ref{lem:breg})
$$
f(x) - f(x^*) = D_f(x,x^*) = D_g(x,x^*) + D_h(x,x^*) \geq D_h(x,x^*),
$$
we have the following inequality
\begin{align}
& R_{km+m}+ \kappa \left[1 - \left(1 - \frac{1}{\kappa}\right)^m \right] \left(2c\eta (1 - L\eta(1+\beta)) - \frac{1}{n} - \frac{2c}{\kappa\lambda} \right) \mathbb{E}\left[f(\tilde{x}^{k+1}) - f(x^*)\right] \nonumber \\
& \leq \left[\frac{2c}{\lambda}\left(1 - \frac{1}{\kappa}\right)^m + 2Lc\eta^2  \kappa \left(1 + \frac{1}{\beta}\right)\left[1 - \left(1 - \frac{1}{\kappa}\right)^m \right]  \right] \mathbb{E}\left[f(\tilde{x}^k) - f(x^*)\right] + \left(1 - \frac{1}{\kappa}\right)^m \mathbb{E}\left[\tilde{G}_k \right]. \label{eq:thm1-rec2}
\end{align}
We use the following notation:
\begin{align*}
\gamma &= \kappa \left[1 - \left(1 - \frac{1}{\kappa}\right)^m \right] \left(2c\eta (1 - L\eta(1+\beta)) - \frac{1}{n} - \frac{2c}{\kappa\lambda} \right) \\
\theta &= \max \left\lbrace \left[\frac{2c}{\gamma\lambda}\left(1 - \frac{1}{\kappa}\right)^m + \frac{2Lc\eta^2}{\gamma}\left(1 + \frac{1}{\beta}\right)  \kappa \left[1 - \left(1 - \frac{1}{\kappa}\right)^m \right]  \right], \left(1 - \frac{1}{\kappa}\right)^m  \right\rbrace.
\end{align*}
Using the above notation, we have the following inequality from Equation~
\eqref{eq:thm1-rec2}.
\begin{align*}
& \mathbb{E}\left[f(\tilde{x}^{k+1}) - f(x^*) + \frac{1}{\gamma} \tilde{G}_{k+1}\right] \leq  \theta \  \mathbb{E}\left[f(\tilde{x}^{k}) - f(x^*) + \frac{1}{\gamma} \tilde{G}_{k}\right],
\end{align*}
where $\theta < 1$ is a constant that depends on the parameters used in the algorithm.
\end{proof}

\section*{Proof of Theorem~\ref{thm:t2}}
\begin{proof}
Let the present epoch be $k+1$. Recall that $D(t)$ denotes the iterate used in the $t^{\text{th}}$ iteration of the algorithm. We define the following:
\begin{align*}
u^t &= - \left[\nabla f_{i_t}(x^{D(t)}) - \nabla f_{i_t}(\tilde{x}^k) + \nabla f(\tilde{x}^k) \right] \\
v^t &= - \left[\nabla f_{i_t}(x^{t}) - \nabla f_{i_t}(\tilde{x}^k) + \nabla f(\tilde{x}^k) \right].
\end{align*}
We have the following:
\begin{align}
\mathbb{E}\|x^{t+1} - x^*\|^2  = \mathbb{E}\|x^t + \eta u^t - x^*\|^2 = \mathbb{E}\left[\|x^t - x^*\|^2 + \eta^2 \|u^t\|^2 + 2\eta \langle x^t - x^*, u^t \rangle \right].
\label{eq:thm2-main-bound}
\end{align}
We first bound the last term of the above inequality. We expand the term in the following manner:
\begin{align}
\mathbb{E}\langle & x^t - x^*, u^t \rangle = \mathbb{E}\left[ \langle x^* - x^t, \nabla f_{i_t}(x^{D(t)})\rangle\right] \nonumber \\
&= \underbrace{\mathbb{E}\left[\langle x^* - x^{D(t)}, \nabla f_{i_t}(x^{D(t)}) \rangle \right]}_{T_3} \nonumber \\
& \ \ + \underbrace{\sum_{d = D(t)}^{t-1}  \mathbb{E}\left[\langle x^d - x^{d+1}, \nabla f_{i_t}(x^{d}) \rangle \right]}_{T_4} + \underbrace{\sum_{d = D(t)}^{t-1}  \mathbb{E}\left[\langle x^d - x^{d+1}, \nabla f_{i_t}(x^{D(t)}) - \nabla f_{i_t}(x^{d}) \rangle \right]}_{T_5}.
\label{eq:thm2-bound}
\end{align}
The first equality directly follows from the definition of $u^t$ and its property of unbiasedness. The second step follows from simple algebraic calculations. Terms $T_3$ and $T_4$ can be bounded in the following way:
\begin{align}
T_3 &\leq \mathbb{E}[f_{i_t}(x^*) - f_{i_t}(x^{D(t)})].
\label{eq:t3-bound}
\end{align}
This bound directly follows from convexity of function $f_{i_t}$.
\begin{align}
T_4 &= \sum_{d = D(t)}^{t-1}  \mathbb{E}\left[\langle x^d - x^{d+1}, \nabla f_{i_t}(x^{d}) \rangle \right] \nonumber \\
& \leq \sum_{d = D(t)}^{t-1} \mathbb{E}\left[ f_{i_t}(x^d) - f_{i_t}(x^{d+1}) + \frac{L}{2} \|x^{d+1} - x^d\|_{i_t}^2\right] \nonumber \\
&\leq \mathbb{E}\left[ f_{i_t}(x^{D(t)}) - f_{i_t}(x^t)\right] +  \frac{L\Delta}{2} \sum_{d = D(t)}^{t-1} \mathbb{E}\left[ \|x^{d+1} - x^d\|^2\right].
\label{eq:t4-bound}
\end{align}
The first inequality follows from lipschitz continuous nature of the gradient of function $f_{i_t}$. The second inequality follows from the definition of $\Delta$. The last term $T_5$ can be bounded in the following manner.
\begin{align}
T_5 &= \mathbb{E}\left[\sum_{d=D(t)}^{t-1} \langle x^d - x^{d+1}, \nabla  f_{i_t}(x^{D(t)}) - \nabla f_{i_t}(x^d) \rangle \right] \nonumber \\
& \leq \mathbb{E}\left[\sum_{d=D(t)}^{t-1} \|x^{d+1} - x^d\|_{i_t} \|\nabla f_{i_t}(x^{D(t)}) - \nabla f_{i_t}(x^d)\| \right] \nonumber \\
&  \leq \mathbb{E}\left[\sum_{d=D(t)}^{t-1} \|x^{d+1} - x^d\|_{i_t}  \sum_{j = D(t)}^{d-1} \|\nabla f_{i_t}(x^{j+1}) - \nabla f_{i_t}(x^{j})\| \right] \nonumber \\
&  \leq \mathbb{E}\left[\sum_{d=D(t)}^{t-1} \sum_{j = D(t)}^{d-1} \frac{L}{2} \left(\|x^{d+1} - x^d\|_{i_t}^2 + \|x^{j+1} - x^j\|_{i_t}^2 \right) \right] \nonumber \\
& \leq \frac{L\Delta(\tau - 1)}{2} \mathbb{E}\sum_{d = D(t)}^{t-1} \|x^{d+1} - x^{d}\|^2.
\label{eq:t5-bound}
 \end{align}
The first inequality follows from Cauchy-Schwartz inequality. The second inequality follows from repeated application of triangle inequality. The third step is a simple application of AM-GM inequality and the fact that gradient of the function $f_{i_t}$ is lipschitz continuous. Finally, the last step can be obtained by using a simple counting argument, the fact that the staleness in gradient is at most $\tau$ and the definition of $\Delta$.

By combining the bounds on $T_3,T_4$ and $T_5$ in Equations~\eqref{eq:t3-bound}, ~\eqref{eq:t4-bound} and ~\eqref{eq:t5-bound} respectively and substituting the sum in Equation~\eqref{eq:thm2-bound}, we get
\begin{align}
\mathbb{E}\langle x^t - x^*, u^t \rangle \leq \mathbb{E}\left[f(x^*) - f(x^t) + \frac{L\Delta\tau}{2} \sum_{d = D(t)}^{t-1} \|x^{d+1} - x^{d}\|^2 \right].
\end{align}
By substituting the above inequality in Equation~\eqref{eq:thm2-main-bound}, we get
\begin{align}
\mathbb{E}\left[ \|x^{t+1} - x^*\|^2 \right] & \leq \mathbb{E}\Bigg[\|x^t - x^*\|^2 + \eta^2\|u^t\|^2 - 2\eta(f(x^t) - f(x^*)) + L\Delta\tau\eta^3 \sum_{d = D(t)}^{t-1} \|u^d\|^2\Bigg].
\label{eq:thm2-rec2}
\end{align} 
We next bound the term $\mathbb{E}[\|u^t\|^2]$ in terms of $\mathbb{E}\left[\|v^t\|^2\right]$ in the following way:
\begin{align*}
\mathbb{E}\left[\|u^t\|^2\right] &\leq 2\mathbb{E}\left[\|u^t - v^t\|^2 +\|v^t\|^2 \right] \\
&\leq 2 \mathbb{E} \left[\|\nabla f_{i_t}(x^t) - \nabla f_{i_t}(x^{D(t)})\|^2\right] + 2 \mathbb{E}\left\|v^t\|^2 \right] \\
&\leq  2 L^2 \tau \sum_{d=D(t)}^{t-1} \mathbb{E} \left[\|x^{d+1} - x^{d}\|_{i_t}^2\right] + 2 \mathbb{E}\left[\|v^t\|^2 \right] \\
&\leq  2L^2 \Delta\eta^2\tau \sum_{d=D(t)}^{t-1} \mathbb{E} \left[\|u^d\|^2\right] + 2 \mathbb{E}\left[\|v^t\|^2 \right].
\end{align*}
The first step follows from Lemma~\ref{lem:sq-lemma} for $r=2$. The third inequality follows from the lipschitz continuous nature of the gradient and simple application of Lemma~\ref{lem:sq-lemma}. Adding the above inequalities from $t=km$ to $t = km+m-1$, we get
\begin{align*}
\sum_{t=km}^{km+m-1} \mathbb{E}\left[\|u^t\|^2\right] &\leq \sum_{t=km}^{km+m-1} \left[ 2L^2 \Delta\eta^2\tau \sum_{d=D(t)}^{t-1} \mathbb{E} \left[\|u^d\|^2\right] + 2 \mathbb{E}\left[\|v^t\|^2 \right] \right] \\
& \leq 2L^2 \Delta\eta^2\tau^2 \sum_{t=km}^{km+m-1} \mathbb{E}\left[\|u^t\|^2\right] + 2 \sum_{t=km}^{km+m-1} \mathbb{E}\left[\|v^t\|^2\right].
\end{align*}
Here we again used a simple counting argument and the fact that the delay in the gradients is at most $\tau$. From the above inequality, we get
\begin{align}
\sum_{t=km}^{km+m-1} \mathbb{E}\left[\|u^t\|^2\right] \leq \frac{2}{\left(1 - 2L^2 \Delta\eta^2\tau^2\right)} \sum_{t=km}^{km+m-1} \mathbb{E}\left[\|v^t\|^2\right].
\label{eq:thm2-var-bound}
\end{align}
Adding Equation~\eqref{eq:thm2-rec2} from $t = km$ to $t = km+m-1$ and substituting Equation~\eqref{eq:thm2-var-bound} in the resultant, we get
\begin{align}
\mathbb{E}&\left[ \|x^{km+m} - x^*\|^2 \right] \leq \mathbb{E}\Bigg[\|\tilde{x}^k - x^*\|^2 + (\eta^2+ L\Delta\tau^2\eta^3)\sum_{t=km}^{km+m-1}\|u^t\|^2 - \sum_{t={km}}^{km+m-1} 2\eta (f(x^t) - f(x^*)) \Bigg] \nonumber \\
& \qquad \qquad \leq \mathbb{E}\Bigg[\|\tilde{x}^k - x^*\|^2 + 2\left( \frac{\eta^2+ L\Delta\tau^2\eta^3}{1 - 2L^2 \Delta\eta^2\tau^2}\right) \sum_{t=km}^{km+m-1} \|v^t\|^2 - \sum_{t={km}}^{km+m-1} 2\eta (f(x^t) - f(x^*))\Bigg]. \nonumber
\end{align}
Here, we used the fact that the system is synchronized after every epoch. The first step follows from telescopy sum and the definition of $\tilde{x}^k$. From Lemma 3 of \cite{Defazio14} (also see \cite{Johnson13}), we have
$$
\mathbb{E}[\|v^t\|^2] \leq  \\ 4L\mathbb{E}\left[f(x^{t}) - f(x^*) + f(\tilde{x}^k) - f(x^*) \right].
$$
Substituting this in the inequality above, we get the following bound:
\begin{align*}
&\left(2\eta - 8L\left( \frac{\eta^2 + L\Delta\tau^2\eta^3}{1 - 2L^2 \Delta\eta^2\tau^2}\right)\right) m \mathbb{E}[f(\tilde{x}^{k+1}) - f(x^*)] \\
 &\leq \left(\frac{2}{\lambda} + 8L\left( \frac{\eta^2+ L\Delta\tau^2\eta^3}{1 - 2L^2 \Delta\eta^2\tau^2}\right)m \right)\mathbb{E}[f(\tilde{x}^{k}) - f(x^*)].
\end{align*}
\end{proof}

\section*{Proof of Theorem~\ref{thm:t3}}
\begin{proof}
Let the present epoch be $k+1$. For simplicity, we assume that the iterates $x$ and $A$ used in the each iteration are from the same time step (index) i.e., $D(t) = D'(t)$ for all $t \in T$. Recall that $D(t)$ and $D'(t)$ denote the index used in the $t^{\text{th}}$ iteration of the algorithm. Our analysis can be extended to the case of $D(t) \neq D'(t)$ in a straightforward manner. We expand function $f$ as $f(x) = g(x) + h(x)$ where $g(x) = \frac{1}{n} \sum_{i \in S} f_i(x)$ and $h(x) = \frac{1}{n} \sum_{i \notin S} f_i(x)$. We define the following:
\begin{align*}
u^t &= \frac{1}{\eta} (x^{t+1} - x^t) = -\left[ \nabla f_{i_t}(x^{D(t)}) - \nabla f_{i_t}(\alpha_{i_t}^{D(t)}) + \frac{1}{n} \sum_i \nabla f_i(\alpha_i^{D(t)}) \right] \\
v^t &= -\left[ \nabla f_{i_t}(x^t) - \nabla f_{i_t}(\alpha_{i_t}^t) + \frac{1}{n} \sum_i \nabla f_i(\alpha_i^t) \right].
\end{align*}
We use the same Lyapunov function used in Theorem~\ref{thm:t1}. We recall the following definitions:
\begin{align*}
G_t &= \frac{1}{n} \sum_{i \in S} \left(f_{i}(\alpha_i^{t}) - f_i(x^*) - \langle \nabla f_i(x^*), \alpha_i^{t} - x^* \rangle \right) \\
R_t &= \mathbb{E}\left[ c\|x^{t} - x^*\|^2 + G_t \right].
\end{align*}
Using unbiasedness of the gradient we have $\mathbb{E}[u^t] = -\nabla f(x^{D(t)})$ and  $\mathbb{E}[v^t] = -\nabla f(x^t)$. Using this observation, we have the following:
\begin{align}
c \mathbb{E}[\|x^{t+1} - x^*\|^2] &= c \mathbb{E}[\|x^t + \eta u^t - x^*\|^2] \nonumber \\
&= c \mathbb{E}\left[ \|x^t - x^*\|^2\right] + c\eta^2 \underbrace{\mathbb{E}\left[\|u^t\|^2\right]}_{T_6} + 2c\eta \underbrace{\mathbb{E}\left[\langle x^t - x^*, u^t \rangle \right]}_{T_7}.
\end{align}
We bound term $T_6$ in the following manner:
\begin{align}
T_6 = \mathbb{E}\left[\|u^t\|^2\right] \leq 2\mathbb{E}\left[\|u^t - v^t\|^2\right] + 2 \mathbb{E}[\|v^t\|^2].
\end{align}
The first term can be bounded in the following manner:
\begin{align}
\mathbb{E}\left[\|u^t - v^t\|^2\right] &\leq \mathbb{E}\Big[\Big\|(\nabla f_{i_t}(x^{t}) - \nabla f_{i_t}(x^{D(t)})) -  (\nabla f_{i_t}(\alpha_{i_t}^{D(t)}) - \nabla f_{i_t}(\alpha_{i_t}^{t})) \nonumber \\
&\qquad \qquad + \frac{1}{n} \sum_i (\nabla f_{i}(\alpha_{i}^{t}) - \nabla f_{i}(\alpha_{i}^{D(t)})) \Big\|^2\Big] \nonumber \\
& \leq 3\mathbb{E}\left[\left\|\nabla f_{i_t}(x^{t}) - \nabla f_{i_t}(x^{D(t)})\right\|^2\right] + 3\mathbb{E}\left[\left\|\nabla f_{i_t}(\alpha_{i_t}^{D(t)}) - \nabla f_{i_t}(\alpha_{i_t}^{t})\right\|^2 \right] \nonumber \\
&\qquad \qquad + 3\mathbb{E}\left[\left\| \frac{1}{n} \sum_i (\nabla f_{i}(\alpha_{i}^{t}) - \nabla f_{i}(\alpha_{i}^{D(t)})) \right\|^2\right] \nonumber \\
& \leq 3\mathbb{E}\left[\left\|\nabla f_{i_t}(x^{t}) - \nabla f_{i_t}(x^{D(t)})\right\|^2\right] + 3\mathbb{E}\left[\left\|\nabla f_{i_t}(\alpha_{i_t}^{D(t)}) - \nabla f_{i_t}(\alpha_{i_t}^{t})\right\|^2 \right] \nonumber \\
&\qquad \qquad + \frac{3}{n}\sum_i\mathbb{E}\left[\left\| \nabla f_{i}(\alpha_{i}^{t}) - \nabla f_{i}(\alpha_{i}^{D(t)}) \right\|^2\right].
\label{eq:thm3-asyn-var-bound}
\end{align}
The second step follows from Lemma~\ref{lem:sq-lemma} for r = 3. The last step follows from simple application of Jensen's inequality. The first term can be bounded easily in the following manner:
\begin{align*}
\mathbb{E} \left[\|\nabla f_{i_t}(x^t) - \nabla f_{i_t}(x^{D(t)})\|^2\right] &\leq  L^2 \tau \sum_{d=D(t)}^{t-1} \mathbb{E} \left[\|x^{d+1} - x^{d}\|_{i_t}^2\right] \\
&\leq  L^2 \Delta\eta^2\tau \sum_{d=D(t)}^{t-1} \mathbb{E} \left[\|u^d\|^2\right].
\end{align*}

The second and third terms need more delicate analysis. The key insight for our analysis is that at most $\tau$ $\alpha_i$'s differ from time step $D(t)$ to $t$. This is due to the fact that the delay is bounded by $\tau$ and at most one $\alpha_i$ changes at each iteration. Furthermore, whenever there is a change in $\alpha_i$, it changes to one of the iterates $x^j$ for some $j = \{\max\{t-\tau,km\}, \dots, t\}$. With this intuition we bound the second term in the following fashion.
\begin{align*}
\mathbb{E}&\left[\left\|\nabla f_{i_t}(\alpha_{i_t}^{D(t)}) - \nabla f_{i_t}(\alpha_{i_t}^{t})\right\|^2 \right] \leq \frac{1}{n} \sum_{j = D(t)}^{t-1}\sum_{i \in S} \mathbb{E}\left[\mathbbm{1}(i = i_j)\left\|\nabla f_{i}(x^j) - \nabla f_{i}(\alpha_{i}^{D(t)})\right\|^2 \right] \\
&\leq \frac{2}{n} \sum_{j = D(t)}^{t-1}\sum_{i \in S} \mathbb{E}\left[\mathbbm{1}(i = i_j)\left(\left\|\nabla f_{i}(x^j) - \nabla f_{i}(x^*)\right\|^2 + \left\|\nabla f_{i}(\alpha_{i}^{D(t)}) - \nabla f_i(x^*)\right\|^2\right) \right] \\
&\leq \frac{2}{n^2} \sum_{j=D(t)}^{t-1} \sum_{i \in S} \mathbb{E}\left[\left\|\nabla f_{i}(x^j) - \nabla f_{i}(x^*)\right\|^2\right] + \frac{2}{n^2} \sum_{j=D(t)}^{t-1} \sum_{i \in S} \mathbb{E}\left[\left\|\nabla f_{i}(\alpha_{i}^{D(t)}) - \nabla f_i(x^*)\right\|^2 \right] \\
&\leq \frac{4L}{n}\sum_{j=D(t)}^{t-1} \mathbb{E}\left[\frac{1}{n}\sum_{i \in S} f_i(x^j) - f_i(x^*) - \langle \nabla f_i(x^*),x^j - x^* \rangle) \right] \\
& \qquad \qquad + \frac{4L\tau}{n} \mathbb{E}\left[\frac{1}{n}\sum_{i \in S} f_i(\alpha_i^{D(t)}) - f_i(x^*) - \langle \nabla f_i(x^*), \alpha_i^{D(t)} - x^* \rangle \right].
\end{align*}
The first inequality follows from the fact that if $\alpha_{i_t}^{D(t)}$ and $\alpha_{i_t}^t$ differ, then (a) $i_t$ should have been chosen in one of the iteration $j \in \{D(t),\dots,t-1\}$ and (b) $\alpha_{i_t}$ is changed to $x^j$ in that iteration. The second inequality follows from Lemma~\ref{lem:sq-lemma} for r = 2. The third inequality follows from the fact that the probability $P(i_j = i) = 1/n$. The last step directly follows from Lemma~\ref{lem:var-lemma}. Note that sum is over indices in $S$ since $\alpha_i$'s for $i \notin S$ do not change during the epoch.

The third term in Equation~\eqref{eq:thm3-asyn-var-bound} can be bounded by exactly the same technique we used for the second term. The bound, in fact, turns out to identical to second term since $i_t$ is chosen uniformly random. Combining all the terms we have
\begin{align*}
T_6 &\leq  2 \mathbb{E}[\|v^t\|^2] + 6L^2 \Delta\eta^2\tau \sum_{d=D(t)}^{t-1} \mathbb{E} \left[\|u^d\|^2\right] + \frac{48L}{n}\sum_{j=D(t)}^{t-1} \mathbb{E}\left[D_g(x^j,x^*) \right] + \frac{48L\tau}{n} \mathbb{E}\left[G_{D(t)} \right].
\end{align*} 

The term $T_7$ can be bounded in a manner similar to one in Theorem~\ref{thm:t2} to obtain the following (see proof of Theorem~\ref{thm:t2} for details):
\begin{align}
\mathbb{E}\langle x^t - x^*, u^t \rangle \leq \mathbb{E}\left[f(x^*) - f(x^t) + \frac{L\Delta\tau\eta^2}{2} \sum_{d = D(t)}^{t-1} \|u^d\|^2 \right].
\end{align}

We need the following bound for our analysis:
\begin{align*}
\sum_{j = 0}^{m-1} \left(1 - \frac{1}{\kappa}\right)^{m - 1 - j} \mathbb{E}[\|u^{km+j}\|^2] &\leq 2 \sum_{j = 0}^{m-1} \left(1 - \frac{1}{\kappa}\right)^{m - 1 - j} \mathbb{E}[\|v^{km+j}\|^2] \\
& \qquad + \sum_{t=km}^{km+m-1} 6L^2 \Delta\eta^2\tau \sum_{d=D(t)}^{t-1} \mathbb{E} \left[\|u^d\|^2\right] \\
& \qquad + \sum_{t=km}^{km+m-1} \frac{48L}{n}\sum_{j=D(t)}^{t-1} \mathbb{E}\left[D_g(x^j,x^*) \right] \\
& \qquad + \sum_{t=km}^{km+m-1} \frac{48L\tau}{n} \mathbb{E}\left[G_{D(t)} \right].
\end{align*}
The above inequality follows directly from the bound on $T_6$ by adding over all $t$ in the epoch. Under the condition
$$
\eta^2 \leq \left(1 - \frac{1}{\kappa}\right)^{m-1}\frac{1}{12L^2\Delta\tau^2}.
$$
we have the following inequality
\begin{align}
\sum_{j = 0}^{m-1} \left(1 - \frac{1}{\kappa}\right)^{m - 1 - j} \mathbb{E}[\|u^{km+j}\|^2] &\leq 4 \sum_{j = 0}^{m-1} \left(1 - \frac{1}{\kappa}\right)^{m - 1 - j} \mathbb{E}[\|v^{km+j}\|^2] \nonumber \\
& \qquad + \sum_{t=km}^{km+m-1} \frac{96L}{n}\sum_{j=D(t)}^{t-1} \mathbb{E}\left[D_g(x^j,x^*) \right] \nonumber \\
& \qquad + \sum_{t=km}^{km+m-1} \frac{96L\tau}{n} \mathbb{E}\left[G_{D(t)} \right].
\label{eq:ut-bound}
\end{align}
The above inequality follows from the fact that 
\begin{align*}
\sum_{t=km}^{km+m-1} 6L^2 \Delta\eta^2\tau \sum_{d=D(t)}^{t-1} \mathbb{E} \left[\|u^d\|^2\right] &\leq \sum_{t=km}^{km+m-1} 6L^2 \Delta\eta^2\tau^2 \mathbb{E} \left[\|u^t\|^2\right] \\
&\leq \frac{1}{2}\sum_{j = 0}^{m-1} \left(1 - \frac{1}{\kappa}\right)^{m - 1 - j} \mathbb{E}[\|u^{km+j}\|^2].
\end{align*}
The above relationship is due to the condition on $\eta$ and the fact that any $d \in \{D(t),\dots,t-1\}$ for at most $\tau$ values of $t$. We have the following:
\begin{align}
R_{t+1} &= c \mathbb{E}\left[ \|x^t - x^*\|^2\right] + c\eta^2 \mathbb{E}\left[\|u^t\|^2\right] + 2c\eta \mathbb{E}\left[\langle x^t - x^*, u^t \rangle \right] + \mathbb{E}\left[G_{t+1} \right] \nonumber \\
&:= \left(1- \frac{1}{\kappa}\right) R_{t} + e_t. \label{eq:thm3-rec}
\end{align}
We bound $e_t$ in the following manner:
\begin{align*}
e_t &= \frac{c}{\kappa} \|x^t - x^*\|^2 + \left(\frac{1}{\kappa} - \frac{1}{n} \right)\mathbb{E}[G_t] + c\eta^2 \mathbb{E}\left[\|u^t\|^2\right] + 2c\eta \mathbb{E}\left[\langle x^t - x^*, u^t \rangle \right] + \mathbb{E}\left[G_{t+1}\right] \\
&= \frac{c}{\kappa} \|x^t - x^*\|^2 + \left(\frac{1}{\kappa} - \frac{1}{n} \right)\mathbb{E}[G_t] + c\eta^2 \mathbb{E}\left[\|u^t\|^2\right] + 2c\eta \mathbb{E}\left[\langle x^t - x^*, u^t \rangle \right] + \frac{1}{n}\mathbb{E}[D_g(x^t, x^*)] \\
&\leq - \left(2c\eta - \frac{2c}{\kappa\lambda}\right) \mathbb{E}\left[f(x^t) - f(x^*)\right] + \left(\frac{1}{\kappa} - \frac{1}{n} \right)\mathbb{E}[G_t] + c\eta^2 \mathbb{E}[\|u^t\|^2] \\
& \qquad \qquad + cL\Delta\tau\eta^3 \sum_{d = D(t)}^{t-1} \mathbb{E}\left[\|u^d\|^2 \right] + \frac{1}{n}\mathbb{E}[D_g(x^t, x^*)].
\end{align*}

The second equality follows from the definition of $G_{t+1}$ (see Equation~\eqref{eq:thm1-gt-rec}).
\begin{align*}
\mathbb{E}[G_{t+1}] &= \left(1 - \frac{1}{n}\right) \mathbb{E} \left[G_t \right] + \frac{1}{n}\mathbb{E}[D_g(x^t, x^*)].
\end{align*}

Applying the recurrence relationship in Equation~\eqref{eq:thm3-rec} with the derived bound on $e_t$, we have 
\begin{align*}
R_{km+m} &\leq \left(1 - \frac{1}{\kappa}\right)^m \tilde{R}_{k} + \sum_{j=0}^{m-1}  \left(1 - \frac{1}{\kappa}\right)^{m-1 - j} e_{km+j} \\
&\leq \left(1 - \frac{1}{\kappa}\right)^m \tilde{R}_{k} + \sum_{j=0}^{m-1} \left(1 - \frac{1}{\kappa}\right)^{m-1 - j} e'_{km+j},
\end{align*}
where $e'_{t}$ is defined as follows
\begin{align*}
\tilde{R}_{k} &= \mathbb{E}\left[ c\|\tilde{x}^{k} - x^*\|^2 + \tilde{G}_k \right] \\
e'_{t} &= - \left(2c\eta - \frac{2c}{\kappa\lambda}\right) \mathbb{E}\left[f(x^t) - f(x^*)\right] + \left(\frac{1}{\kappa} - \frac{1}{n} \right)\mathbb{E}[G_t] \\
& \qquad \qquad + \left(c\eta^2 + \left(1 - \frac{1}{\kappa} \right)^{-\tau} cL\Delta\tau^2\eta^3 \right) \mathbb{E}[\|u^t\|^2]
+ \frac{1}{n}\mathbb{E}[D_g(x^t, x^*)].
\end{align*}
The last inequality follows from that fact that the delay is at most $\tau$. In particular, each index $j \in \{D(t), \dots, t-1\}$ occurs at most $\tau$ times. We use the following notation for ease of exposition:
\begin{align*}
\zeta = \left(c\eta^2 + \left(1 - \frac{1}{\kappa} \right)^{-\tau} cL\Delta\tau^2\eta^3 \right).
\end{align*}
Substituting the bound in Equation~\eqref{eq:ut-bound}, we get the following:
\begin{align}
R_{km+m} &\leq  \left(1 - \frac{1}{\kappa}\right)^m \tilde{R}_{k} - \left(2c\eta - \frac{2c}{\kappa\lambda}\right) \sum_{j=0}^{m-1} \left(1 - \frac{1}{\kappa}\right)^{m - 1 - j} \mathbb{E}\left[f(x^{km+j}) - f(x^*)\right] \nonumber \\
& \qquad + 4\zeta \sum_{j = 0}^{m-1} \left(1 - \frac{1}{\kappa}\right)^{m - 1 - j} \mathbb{E}[\|v^{km+j}\|^2] \nonumber \\
& \qquad + \left[\frac{96\zeta L\tau}{n}  \left(1 - \frac{1}{\kappa}\right)^{-\tau} + \frac{1}{n} \right] \ \sum_{j=0}^{m-1} \left(1 - \frac{1}{\kappa}\right)^{m - 1 - j} \mathbb{E}\left[D_g(x^{km+j},x^*) \right] \nonumber \\
& \qquad + \left[\frac{1}{\kappa} + \frac{96\zeta L\tau}{n}  \left(1 - \frac{1}{\kappa}\right)^{-\tau} - \frac{1}{n} \right] \ \sum_{j=0}^{m-1} \left(1 - \frac{1}{\kappa}\right)^{m - 1 - j} \mathbb{E}\left[G_{D(km+j)} \right].
\label{eq:thm3-par-rec}
\end{align}

We now use the following previously used bound on $v^t$ (see bound $T_2$ in the proof of Theorem~\ref{thm:t1}):
\begin{align*}
 \mathbb{E}[\|v^{t}\|^2] \leq 2L\left(1 + \frac{1}{\beta}\right)\left[G_{t} + D_h(\tilde{x}^k,x^*)\right] + 2L(1+\beta) \mathbb{E}[f(x^t) - f(x^*)].
\end{align*}
Substituting the above bound on $v^t$ in Equation~\eqref{eq:thm3-par-rec}, we get the following:
\begin{align}
R_{km+m} &\leq  \left(1 - \frac{1}{\kappa}\right)^m \tilde{R}_{k} \nonumber \\
& \qquad \qquad - \left[2c\eta - 8\zeta L(1+\beta) - \frac{2c}{\kappa\lambda} - \frac{96\zeta L\tau}{n}  \left(1 - \frac{1}{\kappa}\right)^{-\tau} - \frac{1}{n} \right] \times \nonumber \\
&  \qquad \qquad \qquad \qquad \qquad \sum_{j=0}^{m-1} \left(1 - \frac{1}{\kappa}\right)^{m - 1 - j} \mathbb{E}\left[f(x^{km+j}) - f(x^*)\right] \nonumber \\
& \qquad \qquad + \left[\frac{1}{\kappa} + 8\zeta L\left(1 + \frac{1}{\beta}\right) + \frac{96\zeta L\tau}{n}  \left(1 - \frac{1}{\kappa}\right)^{-\tau} - \frac{1}{n} \right] \times \nonumber \\
& \qquad \qquad \qquad \qquad \qquad \sum_{j=0}^{m-1} \left(1 - \frac{1}{\kappa}\right)^{m - 1 - j} \mathbb{E}\left[G_{km+j} \right] \nonumber \\
& \qquad \qquad  + 8\zeta L\left(1 + \frac{1}{\beta}\right)\sum_{j=0}^{m-1} \left(1 - \frac{1}{\kappa}\right)^{m - 1 - j} \mathbb{E}\left[D_h(\tilde{x}^k,x^*)\right]  \nonumber \\
& \leq   \frac{2c}{\lambda}\left(1 - \frac{1}{\kappa}\right)^m \mathbb{E}\left[f(\tilde{x}^k) - f(x^*)\right] + \left(1 - \frac{1}{\kappa}\right)^m \mathbb{E}\left[ \tilde{G}_k \right] \nonumber \\
& \qquad \qquad - \left[2c\eta - 8\zeta L(1+\beta) - \frac{2c}{\kappa\lambda} - \frac{96\zeta L\tau}{n}  \left(1 - \frac{1}{\kappa}\right)^{-\tau} - \frac{1}{n} \right] \times \nonumber \\
& \qquad \qquad \qquad \qquad \sum_{j=0}^{m-1} \left(1 - \frac{1}{\kappa}\right)^{m - 1 - j} \mathbb{E}\left[f(x^{km+j}) - f(x^*)\right] \nonumber \\
& \qquad \qquad +  8\zeta L\left(1 + \frac{1}{\beta}\right)\kappa \left[1 - \left(1 - \frac{1}{\kappa}\right)^m \right] \mathbb{E}\left[D_h(\tilde{x}^k,x^*)\right].
\label{eq:thm3-final-rec}
\end{align}
The first step is due to the Bregman divergence based inequality $D_f(x,x^*) \geq D_g(x,x^*)$. The second step follows from the expanding $\tilde{R}_k$ and using the strong convexity of function $f$.
For brevity, we use the following notation:
\begin{align*}
\gamma_a &= \kappa \left[1 - \left(1 - \frac{1}{\kappa}\right)^m \right] \left[2c\eta - 8\zeta L(1+\beta) - \frac{2c}{\kappa\lambda} - \frac{96\zeta L\tau}{n}  \left(1 - \frac{1}{\kappa}\right)^{-\tau} - \frac{1}{n} \right] \\
\theta_a &= \max \left\lbrace \left[\frac{2c}{\gamma_a\lambda}\left(1 - \frac{1}{\kappa}\right)^m + \frac{8\zeta L\left(1 + \frac{1}{\beta}\right)}{\gamma_a}  \kappa \left[1 - \left(1 - \frac{1}{\kappa}\right)^m \right]  \right], \left(1 - \frac{1}{\kappa}\right)^m  \right\rbrace .
\end{align*}
We now use the fact that $\tilde{x}^{k+1}$ is chosen randomly from $\{x^{km}, \dots, x^{km+m-1}\}$ with probabilities proportional to $\{(1-1/\kappa)^{m-1}, \dots, 1\}$. Hence, we have the following inequality from Equation~\eqref{eq:thm3-final-rec}:
\begin{align*}
& \mathbb{E}\left[f(\tilde{x}^{k+1}) - f(x^*) + \frac{1}{\gamma_a} \tilde{G}_{k+1}\right] \leq  \theta_a \ \mathbb{E}\left[f(\tilde{x}^{k}) - f(x^*) + \frac{1}{\gamma_a} \tilde{G}_{k}\right],
\end{align*}
where $\theta_a < 1$ is a constant that depends on the parameters used in the algorithm.
\end{proof}

\subsection*{Remarks about the parameters in Theorem~\ref{thm:t1} \& Theorem~\ref{thm:t3}}

In this section, we briefly remark about the parameters in Theorems~\ref{thm:t1} \& ~\ref{thm:t3}. For Theorem~\ref{thm:t1}, suppose we use the following instantiation of the parameters:
\begin{align*}
\eta &= \frac{1}{16(\lambda n + L)} \\
\kappa &= \frac{4}{\lambda \eta} = 64\left(n + \frac{L}{\lambda}\right) \\
\beta &= \frac{2\lambda n + L}{L} \\
c &= \frac{2}{\eta n} = 32\left(\lambda + \frac{L}{n}\right).
\end{align*}
Then we have,
\begin{align*}
\theta = \max \left\lbrace \left[\frac{2(1 - \frac{1}{\kappa})^{m}}{3\left(1 - (1 - \frac{1}{\kappa})^{m}\right)} + \frac{1}{3\left(1 + \frac{2\lambda n}{L}\right)}\right], \left(1 - \frac{1}{\kappa}\right)^{m} \right\rbrace.
\end{align*}
In the interesting case of $L/\lambda = n$ (high condition number regime), since $\kappa = \Theta(n)$, one can obtain a constant $\theta$ (say $\theta = 0.5$) with $m = O(n)$. This leads to $\epsilon$ accuracy in the objective function after $O( \log(1/\epsilon))$ epochs of \svag. When $m = O(n)$, the computational complexity of each epoch of \svag \ is $O(n)$. Hence, the total computational complexity of \svag \ is $O(n \log(1/\epsilon))$. On the other hand, because $L/\lambda = n$, batch gradient descent method requires $O(n \log(1/\epsilon))$ iterations to achieve $\epsilon$ accuracy in the objective value. Since the complexity of each iteration of gradient descent is $O(n)$ (as it passes through the whole dataset for calculating the gradient), the overall computational complexity of batch gradient descent is $O(n^2 \log(1/\epsilon))$. In general, for high condition number regimes (which is typically the case in machine learning applications), \svag \ (like \svrg, \ \saga) will be significantly faster than the batch gradient methods. Furthermore, the convergence rate is strictly better than the sublinear rate obtained for \sgd.

The parameter instantiations for Theorem~\ref{thm:t3} are much more involved. Suppose $\Delta^{1/2} \tau < 1$ (this is the sparse regime that is typically of interest to the machine learning community) and $m > n > 9\tau$. The other case ( $\Delta^{1/2} \tau \geq 1$) can be analyzed in a similar fashion. We set the following parameters:
\begin{align*}
\eta &= \frac{\left(1 - \frac{1}{\kappa}\right)^m}{64(\lambda n + L)} \\
\kappa &= \frac{4}{\lambda \eta} = \frac{256\left(n + \frac{L}{\lambda}\right)}{(1 - \frac{1}{\kappa})^m} \\
\beta &= \frac{2\lambda n + L}{L} \\
c &= \frac{2}{\eta n} = 32\left(\lambda + \frac{L}{n}\right).
\end{align*}
Then we have the following:
\begin{align*}
\zeta &\leq \frac{\left(1 - \frac{1}{\kappa}\right)^m}{32n(\lambda n + L)} \left(1 + \frac{L}{64(\lambda n + L)}\right) \\
\theta_a &\leq \max \left\lbrace \left[\frac{6(1 - \frac{1}{\kappa})^{m}}{7\left(1 - (1 - \frac{1}{\kappa})^{m}\right)} + \frac{195\left(1 - \frac{1}{\kappa}\right)^\tau}{448(1 + \frac{2\lambda n}{L})} \right], \left(1 - \frac{1}{\kappa}\right)^m\right\rbrace
\end{align*}

Again, in the case of $L/\lambda = n$, we can obtain constant $\theta_a$ (say $\theta_a = 0.5$) with $m = \Theta(n)$ and $\kappa = \Theta(n)$. The constants in the parameters are not optimized and can be improved by a more careful analysis. Furthermore, sharper constants can be obtained in specific cases. For example, see \cite{Johnson13} and Theorem~\ref{thm:t2} for synchronous and asynchronous convergence rates of \svrg \ respectively. Similarly, sharper constants for \saga \ can also be derived by simple modifications of the analysis.
 
\section*{Other Lemmatta}
\begin{lemma}
\label{lem:var-lemma}
\cite{Johnson13} For any $\alpha_i \in \mathbb{R}^d$ where $i \in [n]$ and $x^*$, we have
\begin{align*}
\mathbb{E}\left[\|\nabla f_{i_t}(\alpha_{i_t}) - \nabla f_{i_t}(x^*)\|^2 \right] &\leq \frac{2L}{n} \sum_i \left[f_i(\alpha_i) - f(x^*) - \left\langle \nabla f_i(x^*), \alpha_i - x^* \right\rangle \right].
\end{align*}
\end{lemma}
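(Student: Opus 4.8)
The plan is to establish the claim function-by-function and then average over the uniformly drawn index $i_t$. First I would invoke the standard consequence of $L$-smoothness together with convexity of each $f_i$: for an $L$-smooth convex function $g$ one has $g(y) \ge g(x) + \langle \nabla g(x), y - x\rangle + \tfrac{1}{2L}\|\nabla g(y) - \nabla g(x)\|^2$ for all $x,y$ (this is the quadratic lower bound on the Bregman divergence induced by smoothness). Applying it to $g = f_i$ with $y = \alpha_i$ and $x = x^*$ and rearranging yields the key per-index bound
\begin{align*}
\|\nabla f_i(\alpha_i) - \nabla f_i(x^*)\|^2 \le 2L\left[f_i(\alpha_i) - f_i(x^*) - \langle \nabla f_i(x^*), \alpha_i - x^*\rangle\right].
\end{align*}

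Next I would take the expectation over $i_t$, which is drawn uniformly from $[n]$, so that $\mathbb{E}\left[\|\nabla f_{i_t}(\alpha_{i_t}) - \nabla f_{i_t}(x^*)\|^2\right] = \tfrac1n\sum_i \|\nabla f_i(\alpha_i) - \nabla f_i(x^*)\|^2$, the $\alpha_i$ being treated as arbitrary fixed vectors. Summing the per-index bound over $i$ then gives
\begin{align*}
\mathbb{E}\left[\|\nabla f_{i_t}(\alpha_{i_t}) - \nabla f_{i_t}(x^*)\|^2\right] \le \frac{2L}{n}\sum_i \left[f_i(\alpha_i) - f_i(x^*) - \langle \nabla f_i(x^*), \alpha_i - x^*\rangle\right].
\end{align*}

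Finally, to match the stated form I would observe that inside the summation $f_i(x^*)$ may be replaced by $f(x^*)$ at no cost, since $\sum_i f_i(x^*) = n f(x^*) = \sum_i f(x^*)$; the two expressions differ by $\sum_i (f(x^*) - f_i(x^*)) = 0$. This produces exactly the claimed inequality. I do not expect a genuine obstacle here: the only substantive ingredient is the smoothness-plus-convexity lower bound on the Bregman divergence, which is classical, and the remaining steps are the averaging over the uniform index and the bookkeeping identity relating $\sum_i f_i(x^*)$ to $f(x^*)$.
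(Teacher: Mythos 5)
Your proof is correct and is exactly the standard argument from the cited source \cite{Johnson13} (the paper itself does not reprove this lemma, only cites it): apply the smoothness-plus-convexity inequality $\|\nabla f_i(\alpha_i)-\nabla f_i(x^*)\|^2 \le 2L\,D_{f_i}(\alpha_i,x^*)$ to each $f_i$, average over the uniform index, and use $\tfrac1n\sum_i f_i(x^*)=f(x^*)$. The only thing worth flagging is that the argument needs each individual $f_i$ to be convex and $L$-smooth, not just their average $f$; this is the standing assumption in \cite{Johnson13} and is implicitly used throughout this paper as well.
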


\begin{lemma}
\label{lem:breg}
Suppose $f:\mathbb{R}^d \rightarrow \mathbb{R}$ and $f = g + h$ where $f,g$ and $h$ are convex and differentiable. $x^*$ is the optimal solution to $\arg\min_x f(x)$ then we have the following
\begin{align*}
D_f(x,x^*) &= f(x) - f(x^*) \\
D_f(x,x^*) &= D_g(x,x^*) + D_h(x,x^*) \\
D_f(x,x^*) &\geq D_g(x,x^*).
\end{align*}
\end{lemma}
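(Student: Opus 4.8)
The plan is to establish each of the three relations directly from the definition $D_f(x,y) = f(x) - f(y) - \langle \nabla f(y), x-y\rangle$, relying on three elementary ingredients: the unconstrained optimality condition, linearity of the Bregman divergence in its generating function, and nonnegativity of the Bregman divergence of a convex function.

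First I would prove $D_f(x,x^*) = f(x) - f(x^*)$. The key observation is that since $x^*$ minimizes the differentiable function $f$ over all of $\reals^d$, the first-order optimality condition yields $\nabla f(x^*) = 0$. Setting $y = x^*$ in the definition, the linear term $\langle \nabla f(x^*), x - x^*\rangle$ vanishes, leaving exactly $D_f(x,x^*) = f(x) - f(x^*)$. This is the only relation that uses optimality of $x^*$.

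Next, for the additive decomposition $D_f(x,x^*) = D_g(x,x^*) + D_h(x,x^*)$, I would use $f = g+h$ and hence $\nabla f(x^*) = \nabla g(x^*) + \nabla h(x^*)$. Substituting into the definition and regrouping the $g$-dependent and $h$-dependent contributions separately reproduces $D_g(x,x^*)$ and $D_h(x,x^*)$; this is simply linearity of the Bregman divergence in its function argument, and it holds verbatim whether or not $x^*$ is optimal. Finally, the inequality $D_f(x,x^*) \geq D_g(x,x^*)$ follows by combining this decomposition with the fact that a convex differentiable function has nonnegative Bregman divergence: since $h$ is convex, $D_h(x,x^*) = h(x) - h(x^*) - \langle \nabla h(x^*), x - x^*\rangle \geq 0$ by the first-order characterization of convexity, and dropping this nonnegative term gives the claim.

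There is essentially no substantive obstacle here—the lemma is a direct unwinding of definitions. The only point requiring care is invoking $\nabla f(x^*) = 0$, which is valid precisely because the minimization in the statement is unconstrained over $\reals^d$; the remaining two parts are pure algebra together with the standard convexity inequality $D_h \geq 0$.
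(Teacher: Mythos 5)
Your proof is correct and follows exactly the route the paper sketches: optimality of $x^*$ (so $\nabla f(x^*)=0$) for the first identity, linearity of the Bregman divergence in its generating function for the second, and nonnegativity of $D_h$ from convexity for the inequality. The paper states this in one sentence; you have simply filled in the same three steps in detail.
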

\begin{proof}
The proof follows trivially from the fact that $x^*$ is the optimal solution and linearity and non-negative properties of Bregman divergence.
\end{proof}

\begin{lemma}
\label{lem:sq-lemma}
For random variables $z_1, \dots, z_r$, we have
\begin{align*}
\mathbb{E}\left[ \|z_1 + ... + z_r\|^2 \right] \leq r \mathbb{E}\left[\|z_1\|^2 + ... + \|z_r\|^2\right].
\end{align*}
\end{lemma}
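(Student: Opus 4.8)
The statement is a deterministic inequality promoted to expectations, so the plan is to prove it first pointwise (for fixed realizations of the $z_i$) and then take expectations using linearity and monotonicity. Since expectation preserves $\leq$, it suffices to establish the almost-sure bound
\begin{align*}
\Bigl\| z_1 + \dots + z_r \Bigr\|^2 \leq r\left( \|z_1\|^2 + \dots + \|z_r\|^2 \right),
\end{align*}
after which applying $\mathbb{E}[\cdot]$ to both sides yields the claim immediately.

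For the pointwise bound I would invoke convexity of the map $y \mapsto \|y\|^2$ on $\mathbb{R}^d$. By Jensen's inequality applied to the uniform average of the $z_i$,
\begin{align*}
\Bigl\| \tfrac{1}{r}\nlsum_{i=1}^r z_i \Bigr\|^2 \leq \tfrac{1}{r}\nlsum_{i=1}^r \|z_i\|^2,
\end{align*}
and multiplying through by $r^2$ gives exactly the desired pointwise inequality. Taking expectations then completes the proof.

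As an alternative route that avoids naming convexity, I could expand the squared norm as $\|\sum_i z_i\|^2 = \sum_{i,j}\langle z_i, z_j\rangle$, bound each cross term by the arithmetic--geometric mean inequality $\langle z_i, z_j\rangle \leq \tfrac12(\|z_i\|^2 + \|z_j\|^2)$, and observe that in the double sum each $\|z_i\|^2$ is counted exactly $r$ times; this recovers the same constant $r$. Either derivation is elementary.

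There is essentially no genuine obstacle here: the only thing to be careful about is keeping the argument at the level of a fixed realization before invoking expectation, so that no independence or distributional assumption on the $z_i$ is silently used (none is needed). The cleanest presentation is simply to state the Jensen/convexity step and then take $\mathbb{E}$, which is why this lemma is relegated to the auxiliary ``Other Lemmatta'' section as a standard tool used repeatedly (e.g.\ in the bounds on $T_6$ and $\|u^t\|^2$ in the proofs of Theorems~\ref{thm:t2} and~\ref{thm:t3}).
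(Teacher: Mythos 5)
Your proof is correct: the pointwise bound via convexity of $\|\cdot\|^2$ (or equivalently the cross-term expansion with $\langle z_i, z_j\rangle \leq \tfrac12(\|z_i\|^2 + \|z_j\|^2)$) followed by taking expectations is exactly the standard argument, and you are right that no independence or distributional assumption is needed. Note that the paper itself states this lemma \emph{without} proof, treating it as a known elementary fact, so there is no paper proof to compare against; either of your two derivations would serve as a complete justification.
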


\end{document}